\theoremstyle{plain}
\newtheorem{lemma}{Lemma}
\newtheorem{corollary}{Corollary}
\newtheorem{assumption}{Assumption}
\theoremstyle{definition}
\theoremstyle{remark}
\crefname{assumption}{Assumption}{Assumptions}
\crefname{theorem}{Theorem}{Theorems}
\crefname{equation}{}{}
\crefname{ALC@unique}{Line}{Lines}
\let\originalleft\left
\let\originalright\right
\renewcommand{\left}{\mathopen{}\mathclose\bgroup\originalleft}
\renewcommand{\right}{\aftergroup\egroup\originalright}
\newcommand{\paren}[1]{\left(#1\right)}
\newcommand{\bracket}[1]{\left[#1\right]}
\newcommand{\Cov}{\mathrm{Cov}}
\newcommand{\given}{\;\middle|\;}
\newcounter{myalg}
\DeclareMathOperator*{\argmax}{argmax} 
\DeclareMathOperator{\E}{\mathbb{E}}
\DeclareMathOperator{\V}{\mathbb{V}}
\DeclareMathOperator{\R}{\mathbb{R}}
\renewcommand{\paragraph}[1]{\textbf{#1}\hspace{0em}}
\begin{document}

%

%
\runningauthor{Carlos E. Luis, Alessandro G. Bottero, Julia Vinogradska, Felix Berkenkamp, Jan Peters}

\twocolumn[

\aistatstitle{Model-Based Uncertainty in Value Functions}

\aistatsauthor{Carlos E. Luis$^{1,2}$ \And Alessandro G. Bottero$^{1,2}$ \And Julia
Vinogradska$^{1}$ \And Felix Berkenkamp$^{1}$\And Jan Peters$^{2,3,4}$}

\aistatsaddress{$^1$Bosch Center for Artificial Intelligence \quad $^2$Institute for
Intelligent Autonomous Systems, TU Darmstadt  \\
$^{3}$ German Research Center for AI (DFKI), Research Department: Systems AI for Robot Learning
\quad $^{4}$ Hessian.AI}]

\doparttoc 
\faketableofcontents 

\begin{abstract}
  We consider the problem of quantifying uncertainty over expected cumulative rewards in model-based
  reinforcement learning. In particular, we focus on characterizing the \emph{variance} over values
  induced by a distribution over MDPs. Previous work upper bounds the posterior variance over values
  by solving a so-called uncertainty Bellman equation, but the over-approximation may result in
  inefficient exploration. We propose a new uncertainty Bellman equation whose solution converges to
  the true posterior variance over values and explicitly characterizes the gap in previous work.
  Moreover, our uncertainty quantification technique is easily integrated into common exploration
  strategies and scales naturally beyond the tabular setting by using standard deep reinforcement
  learning architectures. Experiments in difficult exploration tasks, both in tabular and continuous
  control settings, show that our sharper uncertainty estimates improve sample-efficiency.
\end{abstract}

\section{INTRODUCTION}
\label{sec:introduction}
The goal of reinforcement learning (RL) agents is to maximize the expected return via interactions
with an \emph{a priori} unknown environment \citep{sutton_reinforcement_2018}. In model-based RL
(MBRL), the agent learns a statistical model of the environment, which can then be used for
efficient exploration \citep{sutton_dyna_1991,strehl_analysis_2008,jaksch_near-optimal_2010}. The
performance of deep MBRL algorithms was historically lower than that of model-free methods, but the
gap has been closing in recent years \citep{janner_when_2019}. Key to these improvements are models
that quantify epistemic and aleatoric uncertainty
\citep{depeweg_decomposition_2018,chua_deep_2018} and algorithms that leverage model uncertainty
to optimize the policy \citep{curi_efficient_2020}. Still, a core challenge in MBRL is to quantify
the uncertainty in long-term performance predictions of a policy given a probabilistic model of
the dynamics \citep{deisenroth_pilco_2011}. Leveraging predictive uncertainty of the policy
performance during policy optimization facilitates \emph{deep exploration} --- methods that reason
about the long-term information gain of rolling out a policy --- which has shown promising results
in the model-free \citep{osband_deep_2016,ciosek_better_2019} and model-based settings
\citep{deisenroth_pilco_2011,fan_model-based_2021}.

We adopt a Bayesian perspective on RL to characterize uncertainty in the decision process via a
posterior distribution. This distributional perspective of the RL environment induces distributions
over functions of interest for solving the RL problem, e.g., the \emph{expected return} of a policy,
also known as the value function. This perspective differs from \emph{distributional} RL
\citep{bellemare_distributional_2017}, whose main object of study is the distribution of the
\emph{return} induced by the inherent stochasticity of the MDP and the policy. As such,
distributional RL models \emph{aleatoric} uncertainty, whereas Bayesian RL focuses on the
\emph{epistemic} uncertainty arising from finite data of the underlying MDP. Recent work by
\citet{eriksson_sentinel_2022} and \citet{moskovitz_tactical_2021} combines Bayesian and
distributional RL for various risk measures accounting for both sources of uncertainty.

We focus on model-based Bayesian RL, where the value distribution is induced by a posterior over
MDPs. In particular, we analyze the \emph{variance} of such a distribution of values.
\citet{schneegass_uncertainty_2010} estimate uncertainty in value functions using statistical
uncertainty propagation, with the caveat of assuming the value distribution is Gaussian. Previous
results by \cite{odonoghue_uncertainty_2018,zhou_deep_2020} establish upper-bounds on the posterior
variance of the values by solving a so-called uncertainty Bellman equation (UBE). These results make
no assumptions on the value distribution and are amenable for deep RL implementations. However,
these bounds over-approximate the variance of the values and thus may lead to inefficient
exploration when used for uncertainty-aware optimization (e.g., risk-seeking or risk-averse
policies). In principle, tighter uncertainty estimates have the potential to improve
data-efficiency, which is the main motivation behind this paper.

\paragraph{Our contribution.}
We show that, under the same assumptions as previous work, the posterior variance of the value
function obeys a Bellman-style recursion \emph{exactly}. Our theory characterizes the gap in the
previously tightest upper-bound by \citet{zhou_deep_2020}, which ignores the inherent aleatoric
uncertainty of acting in a potentially stochastic MDP. Inspired by this insight, we propose
\emph{learning} the solution to the Bellman recursion prescribed by our theory, as done by
\citet{odonoghue_uncertainty_2018}, but integrate it within an actor-critic framework for continuous
action problems, rather than using DQN \citep{mnih_playing_2013} for discrete action selection. Our
experiments in tabular and continuous control problems demonstrate that our variance estimation
method improves sample efficiency when used for optimistic optimization of the policy. The source
code is
available\footnote{\href{https://github.com/boschresearch/ube-mbrl}{\texttt{https://github.com/boschresearch/ube-mbrl}}}.

\paragraph{Related work.}
Model-free approaches to Bayesian RL directly model the distribution over values, e.g., with
normal-gamma priors \citep{dearden_bayesian_1998}, Gaussian Processes \citep{engel_bayes_2003} or
ensembles of neural networks \citep{osband_deep_2016}. \citet{jorge_inferential_2020} estimate value
distributions using a backwards induction framework, while \citet{metelli_propagating_2019}
propagate uncertainty using Wasserstein barycenters. \citet{fellows_bayesian_2021} showed that, due
to bootstrapping, model-free Bayesian methods infer a posterior over Bellman operators rather than
values.

Model-based Bayesian RL maintains a posterior over plausible MDPs given the available data, which induces a distribution over values. The MDP uncertainty is typically represented in the
one-step transition model as a by-product of model-learning. For instance, the well-known PILCO
algorithm by \citet{deisenroth_pilco_2011} learns a Gaussian Process (GP) model of the transition
dynamics and integrates over the model's total uncertainty to obtain the expected values. In order
to scale to high-dimensional continuous-control problems, \citet{chua_deep_2018} propose PETS, which
uses ensembles of probabilistic neural networks (NNs) to capture both aleatoric and epistemic
uncertainty as first proposed by \citet{lakshminarayanan_simple_2017}. Both approaches propagate
model uncertainty during policy evaluation and improve the policy via greedy exploitation over this
model-generated noise. Dyna-style \citep{sutton_dyna_1991} actor-critic algorithms have been paired
with model-based uncertainty estimates for improved performance in both online
\citep{buckman_sample-efficient_2018,zhou_efficient_2019} and offline
\citep{yu_mopo_2020,kidambi_morel_2020} RL.

To balance exploration and exploitation, provably-efficient RL algorithms based on \emph{optimism in
the face of the uncertainty} (OFU) \citep{auer_logarithmic_2006,jaksch_near-optimal_2010} rely on
building upper-confidence (optimistic) estimates of the true values. These optimistic values
correspond to a modified MDP where the rewards are enlarged by an uncertainty bonus, which
encourages exploration. In practice, however, the aggregation of optimistic rewards may severely
over-estimate the true values, rendering the approach inefficient \citep{osband_why_2017}.
\citet{odonoghue_uncertainty_2018} show that methods that approximate the variance of the values can
result in much tighter upper-confidence bounds, while \citet{ciosek_better_2019} demonstrate their
use in complex continuous control problems. Similarly, \citet{chen_ucb_2017} propose a model-free
ensemble-based approach to estimate the variance of values.

Interest about the higher moments of the \emph{return} of a policy dates back to the work of
\citet{sobel_variance_1982}, showing these quantities obey a Bellman equation. Methods that leverage
these statistics of the return are known as \emph{distributional} RL
\citep{tamar_temporal_2013,bellemare_distributional_2017}. Instead, we focus specifically on
estimating and using the \emph{variance} of the \emph{expected return} for policy optimization. A
key difference between the two perspectives is the type of uncertainty they model: distributional RL
models the \emph{aleatoric} uncertainty about the returns, which originates from the aleatoric noise
of the MDP transitions and the stochastic policy; our perspective studies the \emph{epistemic}
uncertainty about the value function, due to incomplete knowledge of the MDP. Provably efficient RL
algorithms use this isolated epistemic uncertainty as a signal to balance exploring the environment
and exploiting the current knowledge.

\citet{odonoghue_uncertainty_2018} propose a UBE whose fixed-point solution converges to a
guaranteed upper-bound on the posterior variance of the value function in the tabular RL setting.
This approach was implemented in a model-free fashion using the DQN \citep{mnih_playing_2013}
architecture and showed performance improvements in Atari games. Follow-up work by
\citet{markou_bayesian_2019} empirically shows that the upper-bound is loose and the resulting
over-approximation of the variance impacts negatively the regret in tabular exploration problems.
\citet{zhou_deep_2020} propose a modified UBE with a tighter upper-bound on the value function,
which is then paired with proximal policy optimization (PPO) \citep{schulman_proximal_2017} in a
conservative on-policy model-based approach to solve continous-control tasks. We propose a new UBE
and integrate it within a model-based soft actor-critic \citep{haarnoja_soft_2018} architecture
similar to \citet{janner_when_2019,froehlich_-policy_2022}.

\section{PROBLEM STATEMENT}
\label{sec:problem_statement}
We consider an agent that acts in an infinite-horizon MDP $\mathcal{M} = \set{\mathcal{S},
\mathcal{A}, p, \rho, r, \gamma}$ with finite state space $\abs{\mathcal{S}} = S$, finite action
space $\abs{\mathcal{A}} = A$, unknown transition function $p: \mathcal{S} \times \mathcal{A} \to
\Delta(S)$ that maps states and actions to the $S$-dimensional probability simplex, an initial state
distribution $\rho: \mathcal{S} \to [0,1]$, a known and bounded reward function $r: \mathcal{S}
\times \mathcal{A} \to \R$, and a discount factor $\gamma \in [0,1)$. Although we consider a known
reward function, the main theoretical results can be easily extended to the case where it is learned
alongside the transition function (see \cref{app:unknown_rewards}). The one-step dynamics $p(s' \mid
s,a)$ denote the probability of going from state $s$ to state $s'$ after taking action $a$. In
general, the agent selects actions from a stochastic policy $\pi: \mathcal{S} \to \Delta(A)$ that
defines the conditional probability distribution $\pi(a\mid s)$. At each time step of episode $t$
the agent is in some state $s$, selects an action $a \sim \pi(\cdot \mid s)$, receives a reward
$r(s,a)$, and transitions to a next state $s' \sim p(\cdot \mid s,a)$. We define the value function
$V^{\pi,p}: \mathcal{S} \to \R$ of a policy $\pi$ and transition function $p$ as the expected sum of
discounted rewards under the MDP dynamics,
\begin{equation}
  V^{\pi, p}(s) = \E_{\tau \sim P}\bracket{\sum\nolimits_{h=0}^{\infty}\gamma^hr(s_h,a_h) \given s_0 = s},
\end{equation}
where the expectation is taken under the random trajectories $\tau$ drawn from the trajectory
distribution $P(\tau) = \prod_{h=0}^{\infty} \pi(a_h \mid s_h)p(s_{h+1}\mid s_h, a_h)$.

We consider a Bayesian setting similar to previous work by
\citet{odonoghue_uncertainty_2018,odonoghue_variational_2021,zhou_deep_2020}, in which the
transition function $p$ is a random variable with some known prior distribution $\Phi_0$. Define the
transition data observed up to episode $t$ as $\mathcal{D}_t$, then we update our belief about the
random variable $p$ by applying Bayes' rule to obtain the posterior distribution conditioned on
$\mathcal{D}_t$, which we denote as $\Phi_t$. The distribution of transition functions naturally
induces a distribution over value functions. The main focus of this paper is to study methods that
estimate he \emph{variance} of the value function $V^{\pi,p}$ under the posterior distribution
$\Phi_t$, namely $\V_{p \sim \Phi_t} \bracket{V^{\pi, p}(s)}$. Our theoretical results extend to
state-action value functions (see \cref{app:extension_q_values}). The motivation behind studying
this quantity is its potential use for exploring the environment.

\citet{zhou_deep_2020} introduce a method to upper-bound the variance of $Q$-values by solving a UBE. Their theory holds for a class of MDPs where the value functions and transition functions are uncorrelated. This family of MDPs is characterized by the following assumptions:
\begin{assumption}[Independent transitions]
    \label{assumption:transitions}
    $p(s' \mid x,a)$ and $p(s' \mid y,a)$ are independent random variables if $x\neq y$.
\end{assumption}
\begin{assumption}[Acyclic MDP \citep{odonoghue_uncertainty_2018}]
    \label{assumption:acyclic}
    The MDP $\mathcal{M}$ is a directed acyclic graph, i.e., states are not visited more than once
    in any given episode.
\end{assumption}
\Cref{assumption:transitions} holds naturally in the case of discrete state-action spaces with a
tabular transition function, where there is no generalization. \Cref{assumption:acyclic} is
non-restrictive as any finite-horizon MDP with cycles can be transformed into an equivalent
time-inhomogeneous MDP without cycles by adding a time-step variable $h$ to the state-space.
Similarly, for infinite-horizon MDPs we can consider an effective horizon $H = 1/ 1 - \gamma$ and
apply the same logic. The direct consequence of these assumptions is that the random variables
$V^{\pi,p}(s')$ and $p(s' \mid s,a)$ are uncorrelated (see
\cref{lemma:independence_from_assumptions,lemma:uncorrelated_property} in \cref{app:proofs_thm_ube} for a formal proof). 

Other quantities of interest are the posterior mean transition function starting from the current
state-action pair $(s,a)$, 
\begin{equation}
  \label{eq:mean_model}
  \bar{p}_t(\cdot \mid s,a) = \E_{p \sim \Phi_t}\bracket{p(\cdot \mid s,a)}, 
\end{equation}
and the posterior mean value function for any $s \in \mathcal{S}$,
\begin{equation}
  \label{eq:mean_value_function}
  \bar{V}_t^{\pi}(s) = \E_{p \sim \Phi_t}\bracket{V^{\pi,p}(s)},
\end{equation}
where the subscript $t$ represents the dependency on $\mathcal{D}_t$ of both quantities. Note that
$\bar{p}_t$ is a transition function that combines both aleatoric \emph{and} epistemic uncertainty.
Even if we limit the posterior $\Phi_t$ to only include deterministic transition functions,
$\bar{p}_t$ remains a stochastic transition function due to the epistemic uncertainty.

\citet{zhou_deep_2020} define the \emph{local} uncertainty
\begin{equation}
  w_t(s) = \V_{p\sim \Phi_t}
  \bracket{\sum\nolimits_{a, s'}\pi(a \mid s) p(s' \mid s,a) \bar{V}^\pi_t(s')},
\end{equation}
and solve the UBE
\begin{equation}
  \label{eq:ube_pombu}
  W_t^\pi(s) = 
  \gamma ^ 2w_t(s) + \gamma^2\sum_{a, s'}\pi(a \mid s)\bar{p}_t(s' \mid s,a) W_t^\pi(s'),
\end{equation}
whose unique solution satisfies $W_t^\pi \geq \V_{p \sim \Phi_t} \bracket{V^{\pi, p}(s)}$.

\section{UNCERTAINTY BELLMAN EQUATION}
In this section, we build a new UBE whose fixed-point solution is \emph{equal} to the variance of
the value function and we show explicitly the gap between \cref{eq:ube_pombu} and $\V_{p \sim
\Phi_t}\bracket{V^{\pi, p}(s)}$.

The values $V^{\pi,p}$ are the fixed-point solution to the Bellman expectation equation, which
relates the value of the current state $s$ with the value of the next state $s'$. Further, under
\cref{assumption:transitions,assumption:acyclic}, applying the expectation operator to the Bellman
recursion results in $\bar{V}_t^{\pi}(s) = V^{\pi, \bar{p}_t}(s)$. The Bellman recursion propagates
knowledge about the \emph{local} rewards $r(s,a)$ over multiple steps, so that the value function
encodes the \emph{long-term} value of states if we follow policy $\pi$. Similarly, a UBE is a
recursive formula that propagates a notion of \emph{local uncertainty}, $u_t(s)$, over multiple
steps. The fixed-point solution to the UBE, which we call the $U$-values, encodes the \emph{long-term
epistemic uncertainty} about the values of a given state.

Previous formulations by \citet{odonoghue_uncertainty_2018,zhou_deep_2020} differ only on their
definition of the local uncertainty and result on $U$-values that upper-bound the posterior
variance of the values. The first key insight of our paper is that we can define $u_t$ such that the
$U$-values converge exactly to the variance of values. This result is summarized in the following
theorem:
\begin{restatable}{theorem}{ube}
  \label{thm:ube}
  Under \cref{assumption:transitions,assumption:acyclic}, for any $s \in \mathcal{S}$ and policy
  $\pi$, the posterior variance of the value function, $U_t^\pi = \V_{p \sim \Phi_t}
  \bracket{V^{\pi,p}}$ obeys the uncertainty Bellman equation
  \begin{equation}
    \label{eq:bellman_exact}
    U_t^\pi(s) = 
    \gamma ^ 2u_t(s) + \gamma^2\sum_{a, s'}\pi(a \mid s)\bar{p}_t(s' \mid s,a) U_t^\pi(s'),
  \end{equation}
  where $u_t(s)$ is the local uncertainty defined as
  \begin{equation}
    \label{eq:bellman_exact_reward}
    u_t(s) = \V_{a, s' \sim \pi, \bar{p}_t} \bracket{\bar{V}_t^\pi(s')} -
    \E_{p \sim \Phi_t} \bracket{\V_{a, s' \sim \pi, p} \bracket{V^{\pi, p}(s')}}.
  \end{equation}
\end{restatable}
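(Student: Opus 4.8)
The plan is to verify directly that $U_t^\pi := \V_{p\sim\Phi_t}[V^{\pi,p}]$ satisfies \cref{eq:bellman_exact}; uniqueness of the solution is then standard, since for $\gamma<1$ the affine operator on the right-hand side of \cref{eq:bellman_exact} is a $\gamma^2$-contraction in the supremum norm. Starting from the Bellman expectation equation $V^{\pi,p}(s)=\sum_a\pi(a\mid s)r(s,a)+\gamma\sum_{a,s'}\pi(a\mid s)p(s'\mid s,a)V^{\pi,p}(s')$ and using that $r$ is known, the reward term is deterministic, so with the shorthand $Z_s:=\sum_{a,s'}\pi(a\mid s)p(s'\mid s,a)V^{\pi,p}(s')$ we get $U_t^\pi(s)=\gamma^2\,\V_{p\sim\Phi_t}[Z_s]$. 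Everything then reduces to proving the ``undiscounted'' identity
\begin{equation*}
  \V_{p\sim\Phi_t}[Z_s]=u_t(s)+\sum_{a,s'}\pi(a\mid s)\bar p_t(s'\mid s,a)\,U_t^\pi(s').
\end{equation*}

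The only input I would import is \cref{lemma:independence_from_assumptions,lemma:uncorrelated_property}: under \cref{assumption:transitions,assumption:acyclic} the local transition vector $\{p(s'\mid s,a)\}_{a,s'}$ is independent of the downstream values $\{V^{\pi,p}(s')\}_{s'}$ (a successor of $s$ is never an ancestor of $s$ in the DAG, and transitions from distinct states are independent), and $\bar V_t^\pi=V^{\pi,\bar p_t}$. Applying independence term by term gives $\E_p[Z_s]=\sum_{a,s'}\pi(a\mid s)\bar p_t(s'\mid s,a)\bar V_t^\pi(s')=\E_{a,s'\sim\pi,\bar p_t}[\bar V_t^\pi(s')]$ and, crucially, $\E_p[\,p(s'\mid s,a)\,V^{\pi,p}(s')^2\,]=\bar p_t(s'\mid s,a)\,\E_p[V^{\pi,p}(s')^2]$.

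The heart of the argument is to expand the quadratic pieces rather than trying to push the posterior variance through the successor sum. Expanding the inner variance, $\V_{a,s'\sim\pi,p}[V^{\pi,p}(s')]=\sum_{a,s'}\pi(a\mid s)p(s'\mid s,a)V^{\pi,p}(s')^2-Z_s^2$; taking $\E_{p\sim\Phi_t}$ and using the independence fact above, the second term of $u_t(s)$ becomes $\sum_{a,s'}\pi(a\mid s)\bar p_t(s'\mid s,a)\E_p[V^{\pi,p}(s')^2]-\E_p[Z_s^2]$. Similarly the first term of $u_t(s)$ is $\sum_{a,s'}\pi(a\mid s)\bar p_t(s'\mid s,a)\bar V_t^\pi(s')^2-(\E_p[Z_s])^2$, and the recursion term is $\sum_{a,s'}\pi(a\mid s)\bar p_t(s'\mid s,a)\big(\E_p[V^{\pi,p}(s')^2]-\bar V_t^\pi(s')^2\big)$. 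Adding $u_t(s)$ and the recursion term, the $\sum\pi\bar p_t\,\bar V_t^\pi(s')^2$ contributions cancel, the $\sum\pi\bar p_t\,\E_p[V^{\pi,p}(s')^2]$ contributions cancel, and what is left is exactly $\E_p[Z_s^2]-(\E_p[Z_s])^2=\V_p[Z_s]$; multiplying by $\gamma^2$ yields \cref{eq:bellman_exact}.

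I expect the main obstacle to be resisting the (wrong) move of replacing $\V_p[Z_s]$ by $\sum_{a,s'}\pi(a\mid s)\bar p_t(s'\mid s,a)\V_p[V^{\pi,p}(s')]$: this fails because $V^{\pi,p}(s')$ and $V^{\pi,p}(s'')$ are correlated whenever the two successors share a descendant in the DAG. The specific shape of $u_t(s)$ --- in particular the subtracted \emph{expected aleatoric} one-step variance $\E_p[\V_{a,s'\sim\pi,p}[V^{\pi,p}(s')]]$ --- is engineered precisely so that the off-diagonal covariance terms it hides inside $\E_p[Z_s^2]$ match those produced by the recursion, making everything telescope. Beyond that, the work is just careful bookkeeping of the quadratic terms and a clean statement of the local/downstream independence coming from the two assumptions.
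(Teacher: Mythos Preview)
Your proposal is correct and follows essentially the same route as the paper's proof: both use the Bellman expectation equation to reduce to $\V_{p}[Z_s]$, invoke the local/downstream independence from \cref{lemma:independence_from_assumptions,lemma:uncorrelated_property} to factor expectations like $\E_p[p(s'\mid s,a)V^{\pi,p}(s')^2]$, and then expand the quadratic pieces via $\E[Y^2]=\V[Y]+(\E[Y])^2$ so that the cross terms cancel. The only cosmetic difference is direction: the paper expands $\V_p[Z_s]=\E_p[Z_s^2]-(\E_p[Z_s])^2$ and identifies the result as $u_t(s)+\sum\pi\bar p_t\,U_t^\pi(s')$, whereas you expand $u_t(s)$ and the recursion term first and show their sum collapses to $\V_p[Z_s]$; the algebra is identical.
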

\begin{proof}
  See \cref{app:proofs_thm_ube}.
\end{proof}
One may interpret the $U$-values from \cref{thm:ube} as the associated state-values of an alternate
\emph{uncertainty MDP}, $\mathcal{U}_t = \set{\mathcal{S}, \mathcal{A}, \bar{p}_t, \rho, \gamma
^2u_t, \gamma ^2}$, where the agent receives uncertainty rewards and transitions according to the
mean dynamics $\bar{p}_t$.

A key difference between $u_t$ and $w_t$ is how they represent epistemic uncertainty: in the former,
it appears only within the first term, through the one-step variance over $\bar{p}_t$; in the
latter, the variance is computed over $\Phi_t$. While the two perspectives may seem fundamentally
different, in the following theorem we present a clear relationship that connects \cref{thm:ube}
with the upper bound \cref{eq:ube_pombu}.
\begin{restatable}{theorem}{connections}
  \label{thm:connection_uncertainties}
  Under \cref{assumption:transitions,assumption:acyclic}, for any $s \in \mathcal{S}$ and policy
  $\pi$, it holds that $u_t(s) = w_t(s) - g_t(s)$, where $g_t(s) = \E_{p \sim
  \Phi_t}\bracket{\V_{a,s' \sim \pi, p} \bracket{V^{\pi,p}(s')}- \V_{a,s' \sim \pi, p}
  \bracket{\bar{V}^\pi_t(s')}}$. Furthermore, we have that the gap $g_t(s)$ is non-negative, thus $u_t(s) \leq w_t(s)$.
\end{restatable}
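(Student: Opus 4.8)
The statement has two parts: the exact decomposition $u_t(s) = w_t(s) - g_t(s)$, and $g_t(s)\ge 0$.

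\textbf{Decomposition.} I would obtain this from a single application of the law of total variance. Consider the joint law of $(p,a,s')$ in which $p\sim\Phi_t$, then $a\sim\pi(\cdot\mid s)$ and $s'\sim p(\cdot\mid s,a)$, and write $\V[\,\cdot\,]$ for the variance under this joint law. Conditioning on $p$,
\begin{equation*}
  \V\!\left[\bar V_t^\pi(s')\right] = \E_{p\sim\Phi_t}\!\left[\V_{a,s'\sim\pi,p}\!\left[\bar V_t^\pi(s')\right]\right] + \V_{p\sim\Phi_t}\!\left[\textstyle\sum_{a,s'}\pi(a\mid s)\,p(s'\mid s,a)\,\bar V_t^\pi(s')\right].
\end{equation*}
The last term is $w_t(s)$ by definition, and, since the marginal law of $s'$ under the joint is $\sum_a\pi(a\mid s)\bar p_t(s'\mid s,a)$, the left-hand side equals $\V_{a,s'\sim\pi,\bar p_t}[\bar V_t^\pi(s')]$. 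Solving for $\V_{a,s'\sim\pi,\bar p_t}[\bar V_t^\pi(s')]$ and substituting into the definition of $u_t$ in \cref{eq:bellman_exact_reward} gives $u_t(s) = w_t(s) - g_t(s)$ with $g_t$ exactly as stated. Note that this step uses only $\bar p_t = \E_{p\sim\Phi_t}[p]$ and needs no assumption on the MDP.

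\textbf{Non-negativity.} For fixed $p$, apply the identity $\V[X] - \V[Y] = \V[X-Y] + 2\Cov(X-Y,Y)$ with $X = V^{\pi,p}(s')$ and $Y = \bar V_t^\pi(s')$ under $a,s'\sim\pi,p$, then take $\E_{p\sim\Phi_t}$:
\begin{equation*}
  g_t(s) = \E_{p\sim\Phi_t}\!\left[\V_{a,s'\sim\pi,p}\!\left[V^{\pi,p}(s') - \bar V_t^\pi(s')\right]\right] + 2\,\E_{p\sim\Phi_t}\!\left[\Cov_{a,s'\sim\pi,p}\!\left(V^{\pi,p}(s') - \bar V_t^\pi(s'),\ \bar V_t^\pi(s')\right)\right].
\end{equation*}
The first term is manifestly non-negative, so it suffices to show the cross term vanishes. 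This is where \cref{lemma:independence_from_assumptions} enters: under \cref{assumption:transitions,assumption:acyclic}, a state $s'$ reached from $s$ in one step can never reach $s$ again, so $V^{\pi,p}(s')$ is a function only of the transition rows out of states other than $s$, and is therefore independent of the row $p(\cdot\mid s,\cdot)$ that governs the draw of $s'$. Splitting $p$ into $p(\cdot\mid s,\cdot)$ and the remaining rows and combining this independence with $\E_{q\sim\Phi_t}[V^{\pi,q}(\bar s)] = \bar V_t^\pi(\bar s)$, one finds that the residual $V^{\pi,p}(s')-\bar V_t^\pi(s')$ has conditional mean zero given $p(\cdot\mid s,\cdot)$ and the realized $s'$; hence $\E_{p\sim\Phi_t}\big[\E_{a,s'\sim\pi,p}[(V^{\pi,p}(s')-\bar V_t^\pi(s'))\,h(s')]\big] = 0$ for any deterministic $h$, and likewise $\E_{p\sim\Phi_t}\big[\E_{a,s'\sim\pi,p}[V^{\pi,p}(s')-\bar V_t^\pi(s')]\,c(p)\big] = 0$ for any $c$ measurable with respect to $p(\cdot\mid s,\cdot)$. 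Expanding $\Cov(X-Y,Y) = \E[(X-Y)Y] - \E[X-Y]\,\E[Y]$ (all under $a,s'\sim\pi,p$) and applying these two facts with $h = \bar V_t^\pi$ and $c = \E_{a,s'\sim\pi,p}[\bar V_t^\pi(s')]$ kills both pieces. Therefore $g_t(s) = \E_{p\sim\Phi_t}[\V_{a,s'\sim\pi,p}[V^{\pi,p}(s') - \bar V_t^\pi(s')]] \ge 0$, and consequently $u_t(s)\le w_t(s)$.

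The main obstacle is the vanishing of the covariance cross term: it is the only place the structural assumptions are invoked, and the delicate point is the bookkeeping of what is random --- in particular, recognizing that the next-state probabilities $p(s'\mid s,a)$ and the downstream values $V^{\pi,p}(s')$ are measurable with respect to independent blocks of $p$, which is what legitimizes the Fubini/factorization used above. Everything else is the law of total variance and one algebraic identity.
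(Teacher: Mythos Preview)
Your proof is correct and, in places, cleaner than the paper's. For the decomposition $u_t(s)=w_t(s)-g_t(s)$ you apply the law of total variance directly to the deterministic function $\bar V_t^\pi(s')$ under the joint $(p,a,s')$-law, which immediately gives \cref{eq:interpretation} and hence the claim; as you note, this step needs no structural assumption. The paper instead works with $Y=\sum_{a,s'}\pi(a\mid s)p(s'\mid s,a)V^{\pi,p}(s')$ conditioned on the row block $\mathbf P_s$, invokes \cref{lemma:independence_from_assumptions} to reduce $\E[Y\mid\mathbf P_s]$ to $\sum_{a,s'}\pi(a\mid s)p(s'\mid s,a)\bar V_t^\pi(s')$, and then expands $\V[\E[Y\mid\mathbf P_s]]$ via three lemmas; your route bypasses this detour. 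For non-negativity, both arguments land on the same identity $g_t(s)=\E_{p\sim\Phi_t}\bigl[\V_{a,s'\sim\pi,p}[V^{\pi,p}(s')-\bar V_t^\pi(s')]\bigr]$: the paper obtains it by computing $\E[\V[Y\mid\mathbf P_s]]$ in two equivalent ways and equating them, whereas you use the algebraic identity $\V[X]-\V[Y]=\V[X-Y]+2\Cov(X-Y,Y)$ and kill the cross term via the block-independence of $p(\cdot\mid s,\cdot)$ and $V^{\pi,p}(s')$. The independence bookkeeping you describe is exactly the content of \cref{lemma:independence_from_assumptions}, so the two proofs hinge on the same structural fact; your packaging is more transparent about where the assumptions enter (only in the cross term), while the paper's derivation embeds them throughout.
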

\begin{proof}
  See \cref{app:proofs_thm_connections}.
\end{proof}

The gap $g_t(s)$ of \cref{thm:connection_uncertainties} can be interpreted as
the \emph{average difference} of aleatoric uncertainty about the next values with respect to the
mean values. The gap vanishes only if the epistemic uncertainty goes to zero, or if the MDP and
policy are both deterministic.

We directly connect \cref{thm:ube,thm:connection_uncertainties} via the equality
\begin{equation}
  \label{eq:interpretation}
	\underbrace{\V_{a, s' \sim \pi, \bar{p}_t} \bracket{\bar{V}_t^\pi(s')}}_\textrm{total} = \underbrace{w_t(s)}_\textrm{epistemic} + \underbrace{\E_{p \sim \Phi_t}\bracket{\V_{a,s' \sim \pi, p} \bracket{\bar{V}^\pi_t(s')}}}_\textrm{aleatoric},
\end{equation}
which helps us analyze our theoretical results. The uncertainty reward defined in
\cref{eq:bellman_exact_reward} has two components: the first term corresponds to the \emph{total
uncertainty} about the \emph{mean} values of the next state, which is further decomposed in
\cref{eq:interpretation} into an epistemic and aleatoric components. When the epistemic uncertainty
about the MDP vanishes, then $w_t(s) \to 0$ and only the aleatoric component remains. Similarly,
when the MDP and policy are both deterministic, the aleatoric uncertainty vanishes and we have
$\V_{a, s' \sim \pi, \bar{p}_t} \bracket{\bar{V}_t^\pi(s')} = w_t(s)$. The second term of
\cref{eq:bellman_exact_reward} is the \emph{average aleatoric uncertainty} about the value of the
next state. When there is no epistemic uncertainty, this term is non-zero and exactly equal to the
alectoric term in \cref{eq:interpretation} which means that $u_t(s) \to 0$. Thus, we can interpret
$u_t(s)$ as a \emph{relative} local uncertainty that subtracts the average aleatoric noise out of
the total uncertainty around the mean values. Perhaps surprisingly, our theory allows negative
$u_t(s)$ (see \cref{subsec:toy_example} for a concrete example).

Through \cref{thm:connection_uncertainties} we provide an alternative proof of why the UBE
\cref{eq:ube_pombu} results in an upper-bound of the variance, specified by the next corollary.
\begin{corollary}
  \label{cor:pombu}
  Under \cref{assumption:transitions,assumption:acyclic}, for any $s \in \mathcal{S}$ and policy
  $\pi$, it holds that the solution to the uncertainty Bellman equation \cref{eq:ube_pombu}
  satisfies $W_t^\pi(s) \geq U_t^\pi(s)$.
\end{corollary}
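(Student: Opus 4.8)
The plan is to subtract the two uncertainty Bellman equations and then invoke a comparison principle for linear Bellman recursions. By \cref{thm:ube}, $U_t^\pi$ is the fixed point of \cref{eq:bellman_exact} with local uncertainty $u_t$, while by construction $W_t^\pi$ is the fixed point of \cref{eq:ube_pombu} with local uncertainty $w_t$; crucially, the two recursions share exactly the same discounted expectation operator, which I will denote $\paren{\mathcal{T}^\pi f}(s) = \gamma^2 \sum_{a, s'} \pi(a \mid s) \bar{p}_t(s' \mid s,a) f(s')$. Defining $\Delta_t(s) = W_t^\pi(s) - U_t^\pi(s)$ and using linearity of $\mathcal{T}^\pi$, the difference satisfies $\Delta_t(s) = \gamma^2 \paren{w_t(s) - u_t(s)} + \paren{\mathcal{T}^\pi \Delta_t}(s)$. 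In other words, $\Delta_t$ is itself the value function of the uncertainty MDP $\mathcal{U}_t$ introduced after \cref{thm:ube}, but with one-step reward $\gamma^2\paren{w_t(s) - u_t(s)}$ in place of $\gamma^2 u_t(s)$.

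The second step is to show this reward is non-negative, which is exactly the content of \cref{thm:connection_uncertainties}: $w_t(s) - u_t(s) = g_t(s) \geq 0$ for every $s$. Hence $\Delta_t$ is the fixed point of a Bellman recursion driven by non-negative rewards and the operator $\mathcal{T}^\pi = \gamma^2 P^\pi$, where $P^\pi$ is the stochastic transition operator induced by $\pi$ and $\bar{p}_t$. Since $\gamma^2 < 1$, the operator $I - \mathcal{T}^\pi$ is invertible with non-negative resolvent $\sum_{h \geq 0} \paren{\mathcal{T}^\pi}^h$, because each power $\paren{\mathcal{T}^\pi}^h$ maps non-negative functions to non-negative functions. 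Applying this resolvent to $\gamma^2 g_t$ yields
\begin{equation}
  \Delta_t(s) = \gamma^2 \sum\nolimits_{h \geq 0} \gamma^{2h}\, \E_{\bar{p}_t, \pi}\bracket{g_t(s_h) \given s_0 = s} \geq 0,
\end{equation}
which is precisely the claimed inequality $W_t^\pi(s) \geq U_t^\pi(s)$. Equivalently, one may argue by value iteration: the map $f \mapsto \gamma^2 g_t + \mathcal{T}^\pi f$ is monotone, so starting from $f_0 \equiv 0$ the iterates form a non-decreasing sequence of non-negative functions converging to $\Delta_t$, giving the same conclusion.

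A remark on rigor rather than difficulty: under \cref{assumption:acyclic} both recursions can instead be solved exactly by backward induction over the effective horizon $H = 1/(1-\gamma)$, and the sign argument then goes through stage by stage, avoiding any discussion of convergence of infinite series. The only facts that genuinely need checking are that the two UBEs share the same transition operator (immediate from inspecting \cref{eq:ube_pombu,eq:bellman_exact}) and that both admit a unique solution, which is guaranteed here by \cref{thm:ube} and by \citet{zhou_deep_2020} for $W_t^\pi$. I do not anticipate any substantial obstacle: once \cref{thm:connection_uncertainties} is available, the corollary reduces to the elementary observation that a Bellman recursion with non-negative local rewards has a non-negative solution.
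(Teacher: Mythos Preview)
Your proposal is correct and takes essentially the same approach as the paper: both arguments observe that \cref{eq:ube_pombu} and \cref{eq:bellman_exact} are Bellman equations sharing the same discounted transition operator, invoke \cref{thm:connection_uncertainties} to conclude $u_t \leq w_t$, and then use the elementary comparison principle that ordered rewards yield ordered value functions. The paper states this last step in one line via ordered returns, whereas you spell it out through the difference equation and the non-negative resolvent $\sum_{h\geq 0}(\gamma^2 P^\pi)^h$, but the content is identical.
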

\begin{proof}
  The solution to the Bellman equations \cref{eq:bellman_exact,eq:ube_pombu} are the value functions
  under some policy $\pi$ of identical MDPs except for their reward functions. Given two identical
  MDPs $\mathcal{M}_1$ and $\mathcal{M}_2$ differing only on their corresponding reward functions
  $r_1$ and $r_2$, if $r_1 \leq r_2$ for any input value, then for any trajectory $\tau$ we have
  that the returns (sum of discounted rewards) must obey $R_1(\tau) \leq R_2(\tau)$. Lastly, since
  the value functions $V_1^\pi$, $V_2^\pi$ are defined as the expected returns under the same
  trajectory distribution, and the expectation operator preserves inequalities, then we
  have that $R_1(\tau) \leq R_2(\tau) \implies  V_1^\pi \leq V_2^\pi$.
\end{proof}
\Cref{cor:pombu} reaches the same conclusions as \citet{zhou_deep_2020}, but it brings important
explanations about their upper bound on the variance of the value function. First, by
\cref{thm:connection_uncertainties} the upper bound is a consequence of the over approximation of
the reward function used to solve the UBE. Second, the gap between the exact reward function
$u_t(s)$ and the approximation $w_t(s)$ is fully characterized by $g_t(s)$ and brings interesting
insights. In particular, the influence of the gap term depends on the stochasticity of the dynamics
and the policy. In the limit, the term vanishes under deterministic transitions and action
selection. In this scenario, the upper-bound found by \citet{zhou_deep_2020} becomes tight.

Our method returns the exact \emph{epistemic} uncertainty about the values by considering the
inherent aleatoric uncertainty of the MDP and the policy. In a practical RL setting, disentangling
the two sources of uncertainty is key for effective exploration. We are interested in exploring
regions of high epistemic uncertainty, where new knowledge can be obtained. If the variance estimate
fuses both sources of uncertainty, then we may be guided to regions of high uncertainty but with
little information to be gained.

\subsection{Toy Example}
\label{subsec:toy_example}
To illustrate the theoretical findings of this paper, consider the simple Markov reward process
(MRP) of \cref{fig:toy_mdp}. Assume $\delta$ and $\beta$ to be random variables drawn from a
discrete uniform distribution $\delta \sim \text{Unif}(\set{0.7, 0.6})$ and $\beta \sim
\text{Unif}(\set{0.5, 0.4})$. As such, the distribution over possible MRPs is finite and composed of
the four possible combinations of $\delta$ and $\beta$. Note that the example satisfies
\cref{assumption:transitions,assumption:acyclic}. In \cref{tab:uncertainty_estimation_toy_example}
we include the results for the uncertainty rewards and solution to the respective UBEs (the results
for $s_1$ and $s_3$ are trivially zero). For state $s_2$, the upper-bound $W^\pi$ is tight and we
have $W^\pi(s_2) = U^\pi(s_2)$. In this case, the gap vanishes not because of lack of stochasticity,
but rather due to lack of epistemic uncertainty about the next-state values. Indeed, the values for
$s_3$ and $s_T$ are independent of $\delta$ and $\beta$, which results in the gap terms for $s_2$
cancelling out. For state $s_0$ the gap is non-zero and $W^\pi$ overestimates the variance of the
value by $\sim36\%$. Our UBE formulation prescribes a \emph{negative} reward to be propagated in
order to obtain the correct posterior variance.

\begin{figure}[t]
  \centering
  \includegraphics[width=0.9\columnwidth]{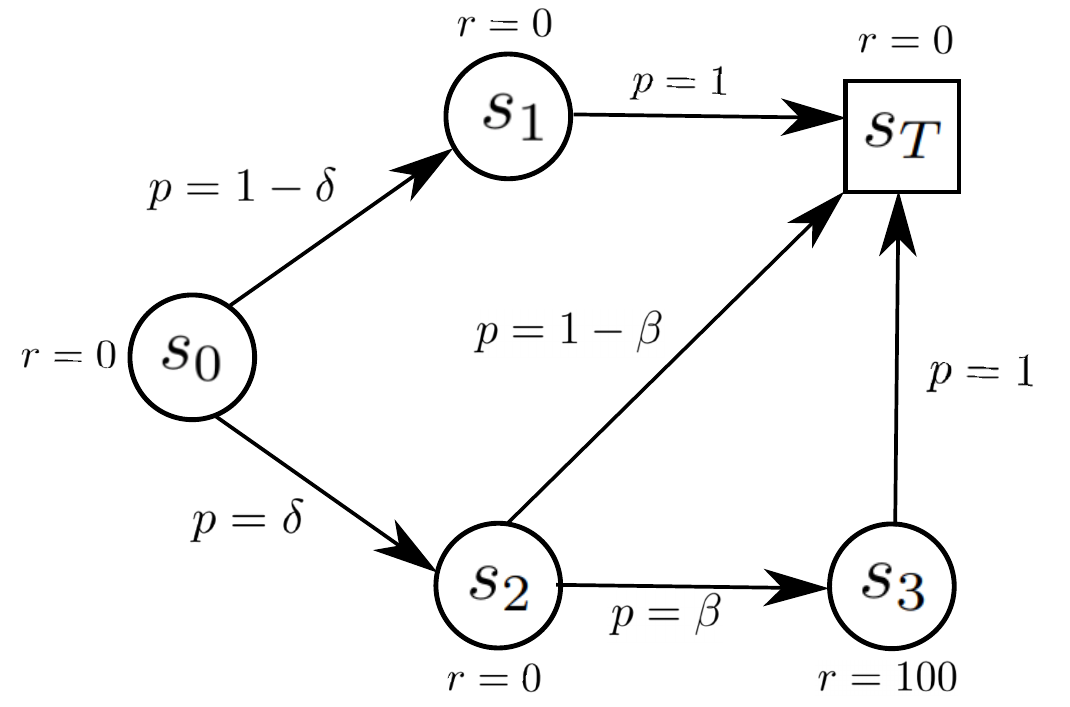}
  \caption{Toy example Markov Reward Process. The random variables $\delta$ and $\beta$ indicate epistemic uncertainty about the MRP. State $s_T$ is an absorbing (terminal) state.}
  \label{fig:toy_mdp} 
\end{figure}

\begin{table}[t]
\caption{Comparison of local uncertainty rewards and solutions to the UBE associated with the toy
example from \Cref{fig:toy_mdp}. The $U$-values converge to the true posterior variance of the values, while $W^\pi$ obtains an upper-bound.} \label{tab:uncertainty_estimation_toy_example}
\begin{center}
\begin{tabular}{c|c|c|c|c}
\textbf{States}  & $u(s)$ & $w(s)$ & $W^\pi(s)$ & $U^\pi(s)$ \\
\hline
$s_0$  & $-0.6$ & $5.0$ & $21.3$ & $15.7$ \\
$s_2$  & $25.0$ & $25.0$ & $25.0$ & $25.0$ \\
\end{tabular}
\end{center}
\end{table}

\section{VARIANCE-DRIVEN OPTIMISTIC EXPLORATION}
\label{sec:optimistic_exploration}
In this section, we propose a technique that leverages uncertainty quantification of $Q$-values to
solve the RL problem. In what follows, we consider the general setting with unknown rewards and
define $\Gamma_t$ to be the posterior distribution over MDPs, from which we can sample both reward
and transition functions. Define $\hat{U}^\pi_t$ to be an estimate of the posterior variance over
$Q$-values for some policy $\pi$ at episode $t$. Then, we update the policy by solving the
upper-confidence bound (UCB) \citep{auer_logarithmic_2006} optimization problem
\begin{equation}
  \label{eq:policy_opt}
  \pi_t = \argmax\nolimits_\pi \bar{Q}^\pi_t + \lambda \sqrt{\hat{U}^\pi_t},
\end{equation}
where $\bar{Q}^\pi_t$ is the posterior mean value function and $\lambda$ is a parameter that trades
off exploration and exploitation. We use \cref{algorithm:our_algorithm} to estimate $\bar{Q}^\pi_t$
and $\hat{U}^\pi_t$: we sample an ensemble of $N$ MDPs from the current posterior $\Gamma_t$ in
\cref{line:sample_model} and use it to solve the Bellman expectation equation in
\cref{line:q_function}, resulting in an ensemble of $N$ corresponding $Q$ functions and the
posterior mean $\bar{Q}^\pi_t$. Lastly, $\hat{U}^\pi_t$ is estimated in \cref{line:q_variance} via a
generic variance estimation method \texttt{qvariance} for which we consider three implementations:
\texttt{ensemble-var} computes a sample-based approximation of the variance given by
$\V\bracket{Q_i}$, which is a model-based version of the estimate from \citet{chen_ucb_2017};
\texttt{pombu} uses the solution to the UBE \cref{eq:ube_pombu}; and \texttt{exact-ube} uses the
solution to our proposed UBE \cref{eq:bellman_exact}. For the UBE-based methods we use the
equivalent equations for $Q$-functions, see \cref{app:q_uncertainty_rewards} for details.

\begin{algorithm}[tb]
   \caption{Model-based $Q$-variance estimation}
   \label{algorithm:our_algorithm}
\begin{algorithmic}[1]
  \STATE {\bfseries Input:} Posterior MDP $\Gamma_t$, policy $\pi$.

  \STATE $\set{p_i, r_i}_{i=1}^{N}$ $\leftarrow$ \texttt{sample\textunderscore mdp}$(\Gamma_{t})$
  \label{line:sample_model}

  \STATE $\bar{Q}^\pi_t$, $\set{Q_i}_{i=1}^{N}$ $\leftarrow$\texttt{solve\textunderscore
  bellman}$\paren{\set{p_i, r_i}_{i=1}^{N}, \pi}$ \label{line:q_function}

  \STATE $\hat{U}^\pi_t$ $\leftarrow$ \texttt{qvariance}$\paren{\set{p_i, r_i, Q_i}_{i=1}^{N}, \bar{Q}^\pi_t, \pi}$ \label{line:q_variance}
\end{algorithmic}
\end{algorithm}

\paragraph{Practical bound.} In practice, typical RL techniques for model learning violate our
theoretical assumptions. For tabular implementations, flat prior choices like a Dirichlet
distribution violate \cref{assumption:acyclic} while function approximation introduces correlations
between states and thus violates \cref{assumption:transitions}. A challenge arises in this practical
setting: \texttt{exact-ube} may result in \emph{negative} $U$-values, as a combination of
(\textit{i}) the assumptions not holding and (\textit{ii}) the possibility of negative uncertainty
rewards. While (\textit{i}) cannot be easily resolved, we propose a practical upper-bound on the
solution of \cref{eq:bellman_exact} such that the resulting $U$-values are non-negative and hence
interpretable as variance estimates. We consider the clipped uncertainty rewards $\tilde{u}_t =
\max(u_{\min}, u_t(s))$ with corresponding $U$-values $\tilde{U}^\pi_t$. It is straightforward to
prove that, if $u_{\min} = 0$, then $W^\pi_t(s) \geq \tilde{U}^\pi_t(s) \geq U^\pi_t(s)$, which
means that using $\tilde{U}^\pi_t$ still results in a tighter upper-bound on the variance than
$W^\pi_t$, while preventing non-positive solutions to the UBE. In what follows, we drop this
notation and assume all $U$-values are computed from clipped uncertainty rewards. Also note that
\texttt{pombu} does not have this problem, since $w_t(s)$ is already non-negative.

\paragraph{Tabular implementation.} For model learning, we impose a Dirichlet prior on the
transition function and a standard Normal prior for the rewards \citep{odonoghue_making_2019}, which
leads to closed-form posterior updates. After sampling $N$ times from the MDP posterior
(\cref{line:sample_model}), we obtain the $Q$-functions (\cref{line:q_function}) in closed-form by
solving the corresponding Bellman equation. For the UBE-based approaches, we estimate uncertainty
rewards via approximations of the expectations/variances therein. Lastly, we solve
\cref{eq:policy_opt} via policy iteration until convergence is achieved or until a maximum number of
steps is reached.

\paragraph{Deep RL implementation.} Inspired by our theory, we propose a deep RL architecture to
scale \Cref{algorithm:our_algorithm} for continuous state-action spaces. Even though there is no
formal proof of the existence of the UBE in this setting, we argue that approximating the sum of
cumulative uncertainty rewards allows for uncertainty propagation.

We adopt as a baseline architecture MBPO by \citet{janner_when_2019} and the implementation from
\citet{pineda_mbrl-lib_2021}. In contrast to the tabular implementation, maintaining an explicit
distribution over MDPs from which we can sample is intractable. Instead, we consider
$\Gamma_t$ to be a discrete uniform distribution of $N$ probabilistic neural networks, denoted
$p_\theta$, that output the mean and covariance of a Gaussian distribution over next states and
rewards \citep{chua_deep_2018}. In this case, the output of \cref{line:sample_model} in
\cref{algorithm:our_algorithm} is precisely the ensemble of neural networks.

The original MBPO trains $Q$-functions represented as neural networks via TD-learning on data
generated via \emph{model-randomized} $k$-step rollouts from initial states that are sampled from
$\mathcal{D}_t$. Each forward prediction of the rollout comes from a randomly selected model of the
ensemble and the transitions are stored in a single replay buffer $\mathcal{D}_{\text{model}}$,
which is then fed into a model-free optimizer like soft actor-critic (SAC)
\citep{haarnoja_soft_2018}. SAC trains a stochastic policy represented as a neural network with
parameters $\phi$, denoted by $\pi_\phi$. The policy's objective function is similar to
\cref{eq:policy_opt} but with entropy regularization instead of the uncertainty term. In practice,
the argmax is replaced by $G$ steps of stochastic gradient ascent, where the policy gradient is
estimated via mini-batches drawn from $\mathcal{D}_{\text{model}}$.

\Cref{algorithm:our_algorithm} requires a few modifications from the MBPO methodology. To implement
\cref{line:q_function}, in addition to $\mathcal{D}_{\text{model}}$, we create $N$ new buffers
$\set{\mathcal{D}^i_{\text{model}}}_{i=1}^{N}$ filled with \emph{model-consistent} rollouts, where
each $k$-step rollout is generated under a single model of the ensemble, starting from initial
states sampled from $\mathcal{D}_t$. We train an ensemble of $N$ value functions
$\set{Q_i}_{i=1}^{N}$, parameterized by $\set{\psi_i}_{i=1}^{N}$, and minimize the residual
Bellman error with entropy regularization 
\begin{equation}
  \label{eq:loss_q}
  \mathcal{L}(\psi_i) = \E_{(s,a,r,s') \sim \mathcal{D}_t^i} \bracket{\paren{y_i- Q_i(s, a; \psi_i))}^2}, 
\end{equation}
where $y_i = r + \gamma \paren{Q_i(s', a'; \bar{\psi}_i) - \alpha \log \pi_\phi(a' \mid s')}$
and $\bar{\psi}_i$ are the target network parameters updated via Polyak averaging for
stability during training \citep{mnih_playing_2013}. The mean $Q$-values, $\bar{Q}^\pi_t$, are
estimated as the average value of the $Q$-ensemble. 

\begin{figure*}
	\centering
  \includegraphics{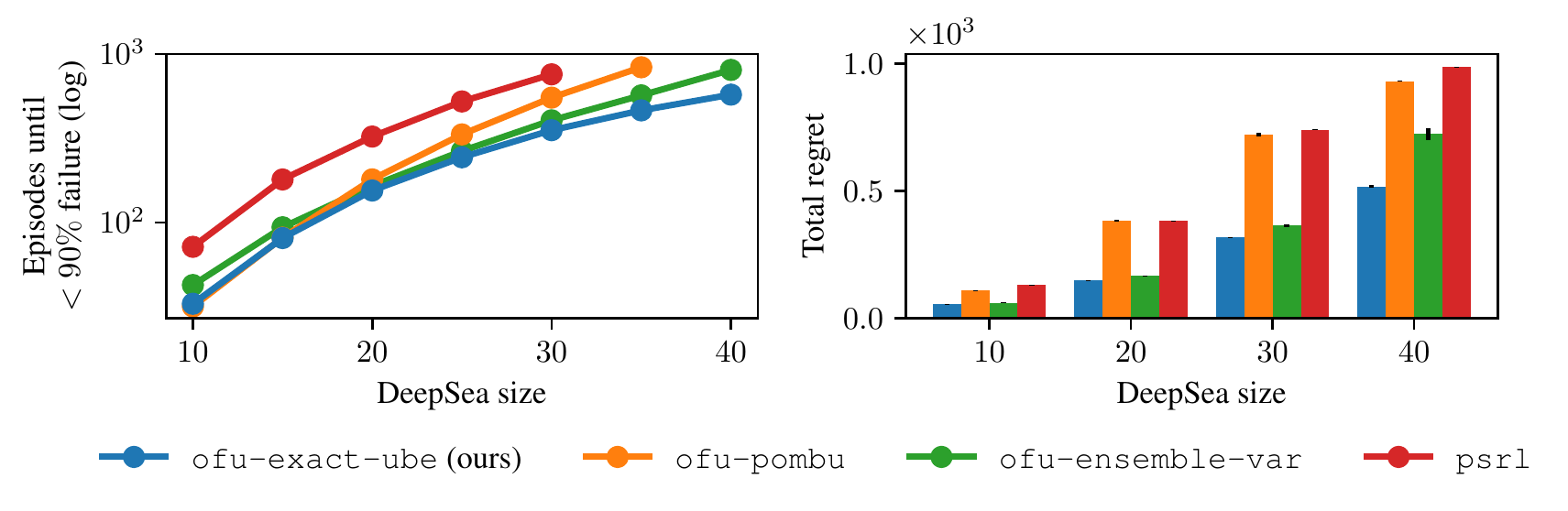}
  \caption{Performance in the \emph{DeepSea} benchmark. Lower values in plots indicate better
  performance. (Left) Learning time is measured as the first episode where the sparse reward has
  been found at least in 10\% of episodes so far. (Right) Total regret is approximately equal to the
  number of episodes where the sparse reward was not found. Results represent the average over 5
  random seeds, and vertical bars on total regret indicate the standard error. Our variance estimate
  achieves the lowest regret and best scaling with problem size.}
  \label{fig:deep_sea_results}
\end{figure*}

To approximate the solution to the UBE, we train a neural network parameterized by a vector
$\varphi$, denoted $U_\varphi$ (informally, the $U$-net). Since we interpret the output of the
network as predictive variances, we (i) regularize the output to be positive by penalizing
negative values and (ii) use a \emph{softplus} output layer to guarantee non-negative values. For
regularization, let $f_\varphi$ be the network output before the softplus operation, then we define
the regulatization loss
\begin{equation}
  \mathcal{L}_{\text{reg}}(\varphi) = \E_{(s,a,r,s') \sim
  \mathcal{D_{\text{model}}}} \bracket{\paren{\text{ReLU}(-f_\varphi(s,a) - \epsilon)}^2},
\end{equation}
such that $\mathcal{L}_{\text{reg}}(\varphi) \geq 0$ iff $f_\varphi(s,a) < \epsilon$ for some small
$\epsilon > 0$. Otherwise, for $f_\varphi(s,a) > \epsilon$ the loss is zero and regularization is
inactive. In practice, we found that regularization is key to avoid network collapse in sparse
reward problems, while it is typically not required if rewards are dense. Training of the $U$-net is
carried out by minimizing the uncertainty Bellman error with regularization:
\begin{equation}
  \label{eq:loss_u}
  \mathcal{L}(\varphi) = \E_{(s,a,r,s') \sim \mathcal{D_{\text{model}}}} \bracket{\paren{z - U(s, a; \varphi)}^2} + \lambda_{\text{reg}}\mathcal{L}_{\text{reg}}(\varphi),
\end{equation}
with targets $z = \gamma^2 u(s,a) + \gamma ^2 U(s', a'; \bar{\varphi})$ and target parameters
$\bar{\varphi}$ updated like in regular critics. Lastly, we optimize $\pi_\phi$ as in MBPO via SGD
on the SAC policy loss, but also adding the uncertainty term from \cref{eq:policy_opt}. A
detailed algorithm of our approach is included in \Cref{app:deep_rl_implementation}.

\paragraph{Runtime complexity.} In tabular RL, \texttt{exact-ube} solves $N+2$ Bellman equations
($\bar{Q}^\pi_t$, $Q_i$, $\hat{U}^\pi_t$), \texttt{pombu} solves two ($\bar{Q}^\pi_t$,
$\hat{U}^\pi_t$) and \texttt{ensemble-var} solves $N+1$ ($\bar{Q}^\pi_t$, $Q_i$). In deep RL,
UBE-based methods have the added complexity of training the $U$-net, but it can be parallelized with
the $Q$-ensemble traning. Despite the increased complexity, we show in
\cref{subsec:ensemble_size_ablation} that our method performs well for small $N$, which reduces the
computational burden.

\section{EXPERIMENTS}
\label{sec:experiments}
In this section, we empirically evaluate the performance of the policy optimization scheme
\cref{eq:policy_opt} for the different variance estimates that we introduced in
\cref{sec:optimistic_exploration}. 

\subsection{Tabular Environments}
We evaluate the tabular implementation in grid-world environments. We include PSRL by
\citet{osband_more_2013} as a baseline since it typically outperforms recent OFU-based
methods \citep{odonoghue_variational_2021,tiapkin_dirichlet_2022}.

\paragraph{DeepSea.} First proposed by \citet{osband_deep_2019}, this environment tests the agent's
ability to explore over multiple time steps in the presence of a deterrent. It consists of an $L
\times L$ grid-world MDP, where the agent starts at the top-left cell and must reach the lower-right
cell. The agent decides to move left or right, while always descending to the row below. We consider
the deterministic version of the problem, so the agent always transitions according to the chosen
action. Going left yields no reward, while going right incurs an action cost (negative reward) of
$0.01 / L$. The bottom-right cell yields a reward of 1, so that the optimal policy is to always go
right. As the size of the environment increases, the agent must perform sustained exploration in
order to reach the sparse reward. Detailed implementation and hyperparameter details are included in
\cref{app:experimental_details}. 

The experiment consists on running each method for 1000 episodes and five random seeds, recording
the total regret and ``learning time'', defined as the first episode where the rewarding state has
been found at least in 10\% of the episodes so far \citep{odonoghue_variational_2021}. For this
experiment, we found that using $u_{\min} = -0.05$ improves the performance of our method: since the
underlying MDP is acyclic, propagating negative uncertainty rewards is consistent with our theory.

\begin{figure}[t]
	\centering
  \includegraphics{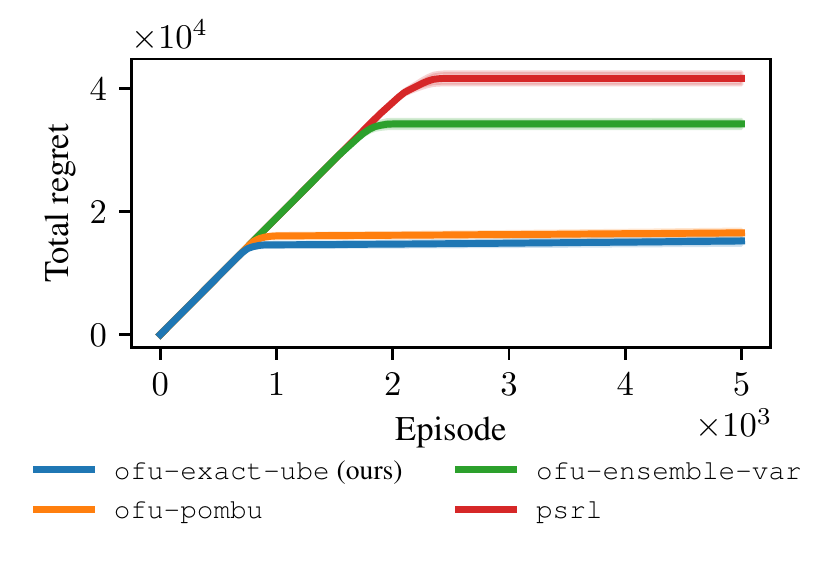}
  \caption{Total regret curve for the 7-room environment. Lower regret is better. Results are the average (solid lines) and
  standard error (shaded regions) over 10 random seeds. Our method achieves the lowest regret,
  significantly outperforming PSRL.}
  \label{fig:nroom}
\end{figure}

\cref{fig:deep_sea_results} (left) shows the evolution of learning time as $L$ increases. Our method
achieves the lowest learning time and best scaling with problem size. Notably, all the OFU-based
methods learn faster than PSRL, a strong argument in favour of using the variance of value functions
to guide exploration. \Cref{fig:deep_sea_results} (right) shows that our approach consistently
achieves the lowest total regret across all values of $L$. This empirical evidence indicates that
the solution to our UBE can be integrated into common exploration techniques like UCB to serve as an
effective uncertainty signal. Moreover, our method significantly improves peformance over
\texttt{pombu}, highlighting the relevance of our theory results.

\begin{figure*}[ht]
	\centering
  \includegraphics{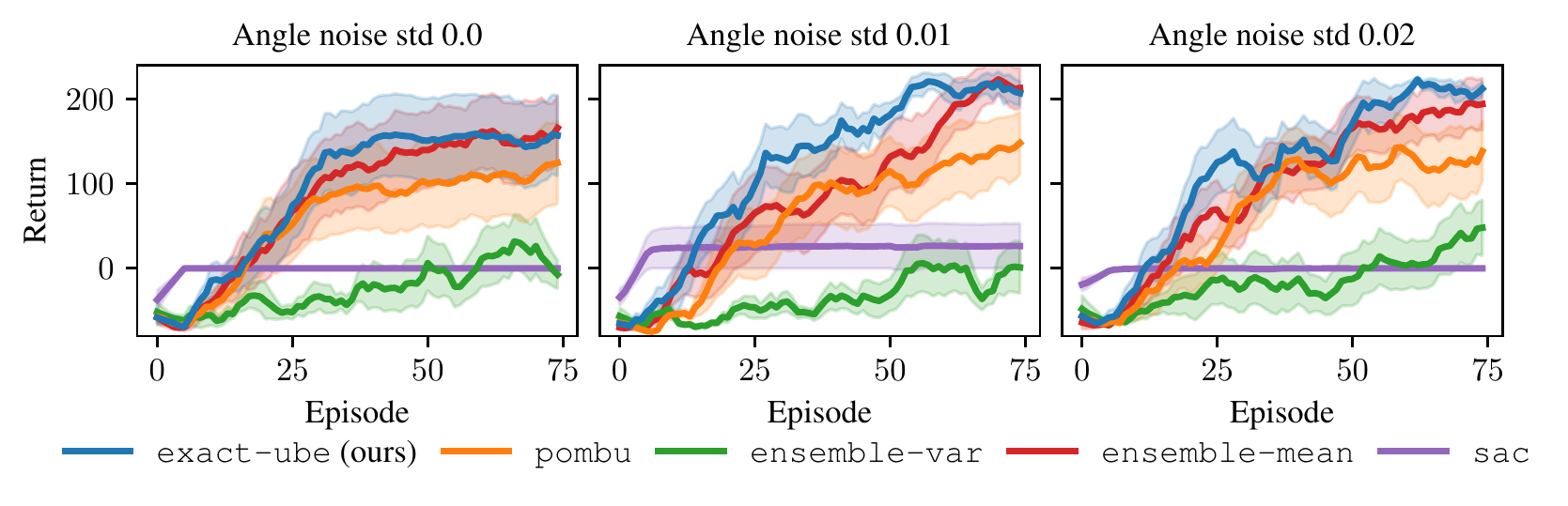}
  \caption{Learning curves of the pendulum swing-up with sparse rewards and action costs. Gaussian
  noise of different scales is added to the angle of the pendulum. The returns are smoothened by a
  moving average filter and we report the mean (solid lines) and standard error (shaded regions)
  over 10 random seeds. Our method shows some improvement in sample efficiency and comparable or
  higher final performance than the baselines.}
  \label{fig:pendulum_returns}
\end{figure*}

Detailed results of all the runs are included in \cref{app:extended_results}. Additional ablation
studies on different estimates for our UBE and exploration gain $\lambda$ are included in
\cref{app:ablation_exact_ube,app:ablation_exploration_gain}, respectively.

\paragraph{7-room.} As implemented by \citet{domingues_rlberry_2021}, the 7-room environment
consists of seven connected rooms of size $5\times 5$. The agent starts in the center of the middle
room and an episode lasts 40 steps. The possible actions are up-down-left-right and the agent
transitions according to the selected action with probability $0.95$, otherwise it lands in a random
neighboring cell. The environment has zero reward everywhere except two small rewards at the start
position and in the left-most room, and one large reward in the right-most room. Unlike
\emph{DeepSea}, the underlying MDP for this environment contains cycles, so it evaluates our method
beyond the theoretical assumptions. In \cref{fig:nroom}, we show the regret curves over 5000
episodes. Our method achieves the lowest regret, which is remarkable considering recent empirical
evidence favoring PSRL over OFU-based methods in these type of environments
\citep{tiapkin_dirichlet_2022}. The large gap between \texttt{ensemble-var} and the UBE-based
methods is due to overall larger variance estimates from the former, which consequently requires
more episodes to reduce the value uncertainty.

\subsection{Continuous Control Environments}
\label{subsec:continuous_control_experiments}
In this section, we evaluate the performance of the deep RL implementation in environments with
continuous state-action spaces. Implementation details and hyperparameters are included in
\Cref{app:deep_rl_implementation}.

\paragraph{Sparse Inverted Pendulum.} As proposed by \citet{curi_efficient_2020}, non-zero rewards
only exist close to the upward position. The pendulum is always initialized in the downward position
with zero velocity and one episode lasts 400 steps. Stochasticity is introduced via zero-mean
Gaussian noise in the pendulum's angle. We complicate the problem further by adding an action cost,
which directly counteracts the effect of exploration signals. The combination of sparse rewards and
action costs represent a failure case for model-free approaches relying on the stochasticity of the
policy to explore (e.g. SAC). While noisy transitions may actually \emph{help} solve these problems
by increasing the random chance of encountering the sparse rewards, they also motivate the need for
proper filtering of aleatoric noise when estimating the epistemic uncertainty.

\begin{figure*}[ht]
	\centering
  \includegraphics{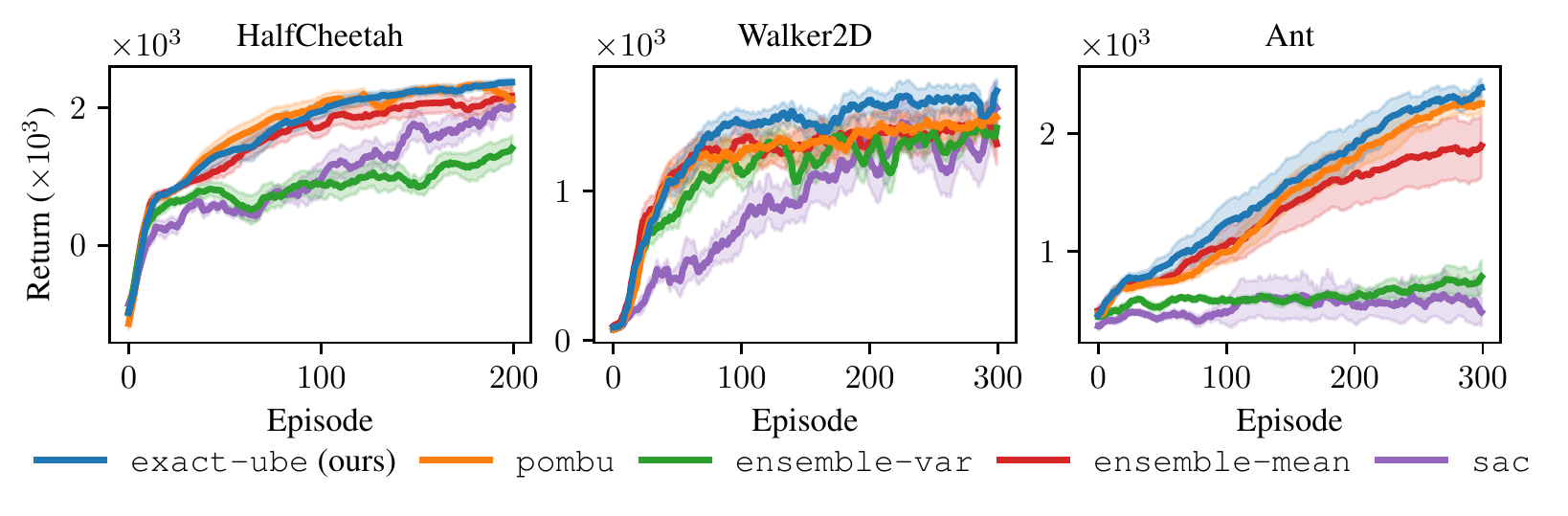}
  \caption{Learning curves in Pybullet locomotion environments. Returns are smoothened by a moving
  average and we report the mean (solid lines) and standard error (shaded regions) over 10 random
  seeds. While these environments have dense rewards, our UBE-based exploration method shows
  improvements in terms of learning speed and final performance.}
  \label{fig:pybullet_results}
\end{figure*}

\begin{figure*}[ht]
	\centering
  \includegraphics{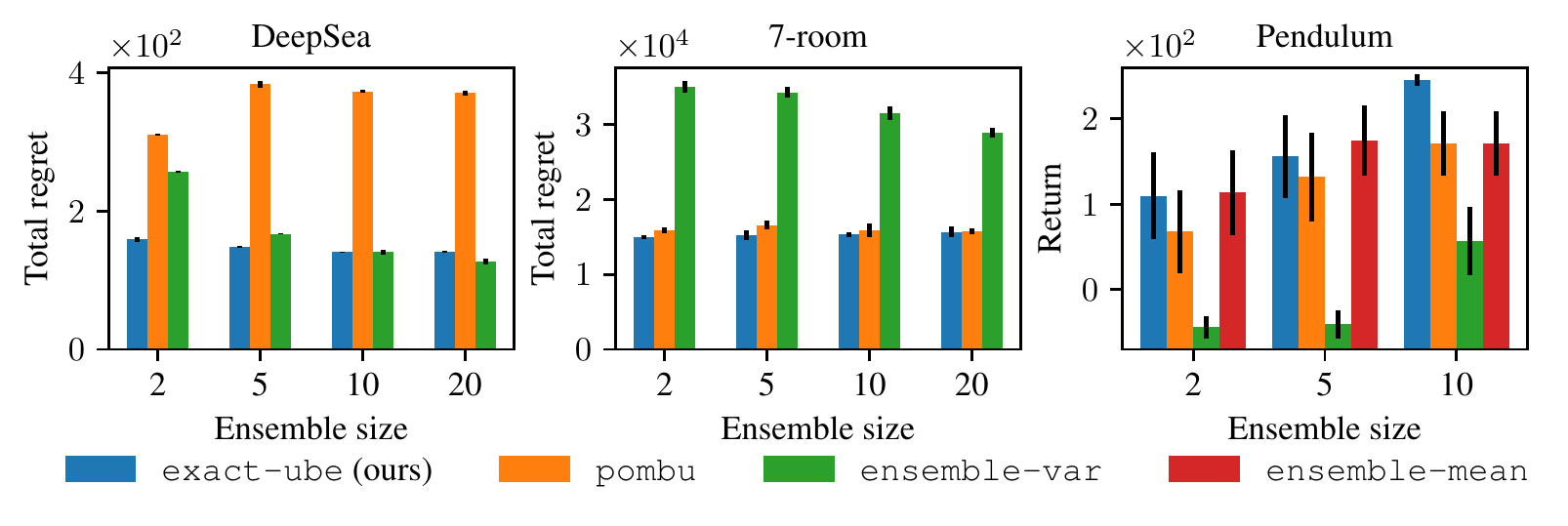}
  \caption{Ablation study for the ensemble size $N$. We report the mean/standard error of the final
  total regret for \emph{DeepSea} ($L=30$) and 7-room across five and ten seeds, respectively. For
  the sparse pendulum, we set the angle noise standard deviation to $0$ and show the mean/standard
  error of the final return after 75 episodes across ten seeds. All methods improve performance for
  larger $N$, but our method is able to achieve the best overal performance.}
  \label{fig:ensemble_ablation}
\end{figure*}

The benchmark includes two additional baselines: \texttt{ensemble-mean}, which uses no optimism and
only averages over the epistemic uncertainty of the $Q$-ensemble, and SAC.
\Cref{fig:pendulum_returns} shows the learning curves over 75 episodes for three different noise
levels. SAC quickly converges to the suboptimal solution of not applying any torque to the pendulum,
while all model-based approaches avoid this pitfall. Overall, our \texttt{exact-ube} method has the
most robust performance across the different noise levels, in most cases improving sample-efficiency
and achieving comparable or higher final return. Importantly, \texttt{exact-ube} outperforms
\texttt{pombu} in all scenarios, which is consistent with our theoretical insights about our method
better handling aleatoric uncertainty. Perhaps surprisingly, greedily averaging over the epistemic
uncertainty (\texttt{ensemble-mean}) is a strong baseline. Meanwhile, the \texttt{ensemble-var}
method tends to over-explore due to higher variance estimates than the UBE-based methods, leading to
more erratic learning curves and lower sample-efficiency (see \Cref{app:pendulum_viz} for a
visualization). 

\paragraph{PyBullet Locomotion.} We evaluate performance on three locomotion tasks from the PyBullet
suite \citep{coumans_pybullet_2016}, which have increased dimensionality compared to the simple
pendulum environment. Although these environments have dense rewards, thus arguably less need for
deep exploration, the results in \cref{fig:pybullet_results} demonstrate some performance
improvement using \texttt{exact-ube} compared to the baselines. Similar to the pendulum task,
\texttt{ensemble-var} affords higher variance estimates which severely hinders performance, while
\texttt{ensemble-mean} is a strong baseline upon which some improvements can be afforded with
UBE-based optimism.

While we cannot make broad claims based on these results, they provide supporting evidence that: (1)
UBE-based methods can be scaled to continuous-control problems using $U$-nets and (2) our UBE
formulation provides benefits in solving RL tasks with respect to prior work.

\subsection{Ensemble Size Ablation}
\label{subsec:ensemble_size_ablation}
The ensemble size $N$ represents a critical hyperparameter for ensemble-based methods, balancing
compute and sample diversity. The work by \citet{an_uncertainty-based_2021} suggests that classical
ensemble methods may require large $N$ to achieve good performance, which is computationally
expensive. We evaluate this hypothesis through an ablation study over $N$ across different
exploration tasks. The results in \cref{fig:ensemble_ablation} show that our method achieves the
best or comparable performance across all environments and values of $N$. The \texttt{ensemble-var}
estimate is more sensitive to $N$ and its performance increases for larger ensembles, matching the
observations from \citet{an_uncertainty-based_2021}. We hypothesize that sample-based approximations
of the local uncertainty rewards, which typically have small magnitude, are less sensitive to sample
size than directly estimating variance from the ensemble members. Further experiments in the
pendulum environment (included in \Cref{app:sparse_pendulum_ensemble_ablation}) suggest that larger
ensembles may not always lead to better performance in the presence of sparse rewards; in the
absence of a strong reward signal, most ensemble members will agree on predicting close-to-zero
values which may then lead to premature convergence of the policy. We hypothesize that for larger
ensembles it is key to promote sufficient diversity to avoid variance collapse and solve the task.

\section{CONCLUSIONS}
In this paper, we derived an uncertainty Bellman equation whose fixed-point solution converges to
the variance of values given a posterior distribution over MDPs. Our theory brings new understanding
by characterizing the gap in previous UBE formulations that upper-bound the variance of values. We
showed that this gap is the consequence of an over-approximation of the uncertainty rewards being
propagated through the Bellman recursion, which ignore the inherent \emph{aleatoric} uncertainty
from acting in an MDP. Instead, our theory recovers exclusively the \emph{epistemic} uncertainty due
to limited environment data, thus serving as an effective exploration signal.

We proposed a practical method to estimate the solution of the UBE, scalable beyond tabular problems
with standard deep RL practices. Our variance estimation was integrated into a model-based approach
using the principle of optimism in the face of uncertainty to explore effectively. Experimental
results showed that our method improves sample efficiency in hard exploration problems and without
requiring large ensembles.

\typeout{}
\bibliographystyle{plainnatnourl}
\bibliography{references}


\clearpage
\appendix

\thispagestyle{empty}

\onecolumn
\part[Supplementary Material]{}
\aistatstitle{Supplementary Material: \\
Model-Based Uncertainty in Value Functions}

\mtcsetdepth{parttoc}{4}

\noptcrule
\vspace{-30ex}
\parttoc 

\clearpage

\section{THEORY PROOFS}
\label{app:proofs}
\subsection{Proof of \cref{thm:ube}}
\label{app:proofs_thm_ube}
In this section, we provide the formal proof of \cref{thm:ube}. We begin by showing an expression
for the posterior variance of the value function without assumptions on the MDP. We define the joint
distribution $p^\pi(a, s' \mid s) = \pi(a\mid s)p(s' \mid s,a)$ for a generic transition function
$p$. To ease notation, since $\pi$ is fixed, we will simply denote the joint distribution as $p(a,s'
\mid s)$. 

\begin{lemma}
  \label{lemma:variance_decomp_no_assumptions}
  For any $s \in \mathcal{S}$ and any policy $\pi$, it holds that
  \begin{equation}
    \label{eq:variance_decomposition}
    \V_{p \sim \Phi_t} \bracket{V^{\pi, p}(s)} = \gamma^2\E_{p \sim \Phi_t} \bracket{
      \paren{
        \sum_{a,s'}p(a, s' \mid s) V^{\pi, p}(s')
      }^2
    } - \gamma^2
    \paren{
      \E_{p \sim \Phi_t} \bracket{
        \sum_{a,s'}p(a,s' \mid s) V^{\pi, p}(s')
      }
    }^2.
  \end{equation}
\end{lemma}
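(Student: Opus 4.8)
The plan is to derive the identity directly from the Bellman expectation equation, which holds \emph{deterministically} for every realization of the random transition function $p$ in the support of $\Phi_t$. Concretely, for any fixed $p$ and any $s$,
$V^{\pi,p}(s) = \sum_a \pi(a\mid s) r(s,a) + \gamma \sum_{a,s'} p(a,s'\mid s) V^{\pi,p}(s')$,
where I use the shorthand $p(a,s'\mid s) = \pi(a\mid s)p(s'\mid s,a)$ from the surrounding text. The first term, $\bar r^\pi(s) := \sum_a \pi(a\mid s) r(s,a)$, does not depend on $p$ because the reward function is known and fixed; hence it is a constant with respect to the posterior $\Phi_t$. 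This is the only structural fact used, which is why the statement carries no assumptions on the MDP.

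Next I would invoke shift- and scale-invariance of the variance. Since $\bar r^\pi(s)$ is constant in $p$,
$\V_{p \sim \Phi_t}\bracket{V^{\pi,p}(s)} = \V_{p \sim \Phi_t}\bracket{V^{\pi,p}(s) - \bar r^\pi(s)} = \V_{p \sim \Phi_t}\bracket{\gamma \sum_{a,s'} p(a,s'\mid s) V^{\pi,p}(s')} = \gamma^2\, \V_{p \sim \Phi_t}\bracket{\sum_{a,s'} p(a,s'\mid s) V^{\pi,p}(s')}$.
Finally, expanding this remaining variance through $\V[X] = \E[X^2] - (\E[X])^2$ with $X = \sum_{a,s'} p(a,s'\mid s) V^{\pi,p}(s')$ yields exactly \eqref{eq:variance_decomposition}.

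There is essentially no obstacle here; the lemma is a bookkeeping step that rewrites the posterior variance in a form amenable to the recursive treatment in the proof of \cref{thm:ube}. The only points worth stating explicitly for rigor are that $V^{\pi,p}(s')$ is a measurable function of $p$, so that the sums and products inside the expectations are well defined, and that boundedness of $r$ together with $\gamma < 1$ guarantees $\norm{V^{\pi,p}}_\infty \le \norm{r}_\infty/(1-\gamma)$ uniformly in $p$, so the first and second moments appearing on the right-hand side are finite. With these remarks in place the three displayed equalities above constitute the full argument.
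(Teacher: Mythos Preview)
Your proof is correct and follows essentially the same approach as the paper: both invoke the Bellman expectation equation, drop the reward term because it is deterministic under $\Phi_t$, and then expand the remaining variance via $\V[X]=\E[X^2]-(\E[X])^2$. Your added remarks on measurability and boundedness are not in the paper's version but are harmless clarifications.
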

\begin{proof}
  Using the Bellman expectation equation
  \begin{equation}
    \label{eq:bellman_expectation}
    V^{\pi,p}(s) = \sum_{a} \pi(a \mid s)r(s,a) + \gamma \sum_{a, s'}p(a,s' \mid s)V^{\pi, p}(s'),
  \end{equation}
  we have
  \begin{align}
    \V_{p \sim \Phi_t} \bracket{V^{\pi, p}(s)} &= \V_{p \sim \Phi_t} \bracket{\sum_a \pi(a \mid s)r(s,a) + \gamma\sum_{a, s'} p(a,s' \mid s)V^{\pi, p}(s')} \\
    &= \V_{p \sim \Phi_t} \bracket{\gamma\sum_{a,s'}p(a, s' \mid s)V^{\pi, p}(s')},  \label{eq:var_no_assumption}
  \end{align}
  where \cref{eq:var_no_assumption} holds since $r(s,a)$ is deterministic. Using the identity $\V[Y]
  = \E[Y^2] - (\E[Y])^2$ on \cref{eq:var_no_assumption} concludes the proof. 
\end{proof}

The next result is the direct consequence of our set of assumptions.
\begin{lemma}
  \label{lemma:independence_from_assumptions}
  Under \cref{assumption:transitions,assumption:acyclic}, for any $s \in \mathcal{S}$, any policy
  $\pi$, $\Cov[p(s' \mid s,a), V^{\pi,p}(s')] = 0$.
\end{lemma}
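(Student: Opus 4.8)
The plan is to prove the stronger fact that $p(s'\mid s,a)$ and $V^{\pi,p}(s')$ are \emph{independent} random variables under the posterior $\Phi_t$; since independent random variables are uncorrelated, this immediately yields $\Cov[p(s'\mid s,a),V^{\pi,p}(s')]=0$. The crux is to pin down, for every state $x$, which coordinates of the random transition function $p$ the value $V^{\pi,p}(x)$ actually depends on.

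First I would define, for each $x\in\mathcal{S}$, the set $\mathcal{R}(x)\subseteq\mathcal{S}$ of \emph{descendants} of $x$: the states reachable from $x$ along a sequence of transitions that have positive probability under some model in the support of $\Phi_t$, with $x\in\mathcal{R}(x)$ by convention. Using the acyclic reformulation guaranteed by \cref{assumption:acyclic} (and, in the infinite-horizon case, the effective horizon $H=1/(1-\gamma)$), the states admit a topological order, and I would show by backward induction along this order that $V^{\pi,p}(x)$ is a measurable function of the sub-collection $\set{p(\cdot\mid y,a):y\in\mathcal{R}(x),\ a\in\mathcal{A}}$ alone. The base case is a terminal state, whose value is a deterministic constant; for the inductive step, the Bellman expectation equation \cref{eq:bellman_expectation} expresses $V^{\pi,p}(x)$ through $p(\cdot\mid x,a)$ and the values $V^{\pi,p}(x')$ at successors $x'$ of $x$, and each such successor satisfies $\mathcal{R}(x')\subseteq\mathcal{R}(x)$, so the dependence stays confined to $\set{p(\cdot\mid y,a):y\in\mathcal{R}(x)}$.

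Next I would invoke acyclicity a second time to observe $s\notin\mathcal{R}(s')$: since $s'$ is a successor of $s$, a path from $s'$ back to $s$ would close a cycle, contradicting \cref{assumption:acyclic}. Consequently $V^{\pi,p}(s')$ is measurable with respect to $\set{p(\cdot\mid y,\cdot):y\neq s}$, whereas $p(s'\mid s,a)$ is a coordinate of the row $p(\cdot\mid s,\cdot)$. Reading \cref{assumption:transitions} as the assertion that the transition rows $p(\cdot\mid y,\cdot)$ for distinct $y$ are mutually independent, these two quantities are functions of independent families and hence independent, which closes the argument.

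The step I expect to be the main obstacle is making the ``$V^{\pi,p}(x)$ depends only on descendant rows'' claim genuinely rigorous rather than merely intuitive: one has to set up the backward induction over a well-founded order (this is precisely where \cref{assumption:acyclic} enters) and verify that reducing to, then returning from, the finite-horizon/time-inhomogeneous reformulation does not sneak the row $p(\cdot\mid s,\cdot)$ back into $V^{\pi,p}(s')$. Once that measurability fact is in hand, the rest — acyclicity forbids $s$ from being a descendant of $s'$, \cref{assumption:transitions} gives independence, independence gives zero covariance — is routine.
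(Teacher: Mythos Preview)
Your proposal is correct and follows essentially the same approach as the paper: both argue that $V^{\pi,p}(s')$ depends only on transition rows $p(\cdot\mid y,\cdot)$ at states $y$ reachable from $s'$, use acyclicity to conclude $s$ is not among those states, and then invoke \cref{assumption:transitions} to obtain independence (hence zero covariance). Your backward-induction formalization via the Bellman recursion is more explicit than the paper's appeal to the trajectory distribution, but the underlying argument is the same.
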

\begin{proof}
  Define $\tau$ to be any trajectory starting from state $s'$, $\tau = \set{s', a_0, s_1, a_1,
  \dots}$. First, by \cref{assumption:transitions}, if $s_i \neq s'$ for some $i \in \set{1, 2,
  \dots}$, then $p(s' \mid s,a)$ is independent of $p(s' \mid s_i, a)$. However, by
  \cref{assumption:acyclic}, $s_i \neq s'$ for all $i>0$, which implies that the trajectory
  distribution $P(\tau)$ is independent of the transition $p(s' \mid s,a)$. Lastly, since
  $V^{\pi,p}(s')$ is an expectation under $P(\tau)$, and independence implies zero correlation, the
  lemma holds.
\end{proof}

Using the previous result yields the following lemma.
\begin{lemma}
  \label{lemma:uncorrelated_property}
  Under \cref{assumption:transitions,assumption:acyclic}, it holds that
  \begin{equation}
    \label{eq:uncorrelated_property}
    \sum_{a, s'} \E_{p \sim \Phi_t} \bracket{
      p(a, s' \mid s) V^{\pi, p}(s')
    } = 
    \sum_{a, s'} \bar{p}_t(a, s' \mid s) \E_{p \sim \Phi_t} \bracket{V^{\pi, p}(s')}.
  \end{equation}
\end{lemma}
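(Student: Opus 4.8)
The plan is to reduce \cref{lemma:uncorrelated_property} directly to the vanishing covariance established in \cref{lemma:independence_from_assumptions}, using only the definitions of the joint distributions and the finiteness of the state-action spaces. First I would unfold the joint notation on both sides, writing $p(a,s' \mid s) = \pi(a \mid s)\,p(s' \mid s,a)$ and $\bar{p}_t(a,s' \mid s) = \pi(a \mid s)\,\bar{p}_t(s' \mid s,a)$, recalling that $\bar{p}_t(s' \mid s,a) = \E_{p \sim \Phi_t}[p(s' \mid s,a)]$ from \cref{eq:mean_model} and $\E_{p \sim \Phi_t}[V^{\pi,p}(s')] = \bar V_t^\pi(s')$ from \cref{eq:mean_value_function}.

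Next, since $\pi(a \mid s)$ is deterministic and the sums over $a \in \mathcal{A}$ and $s' \in \mathcal{S}$ are finite, I would pull $\pi(a \mid s)$ outside the expectation and exchange the (finite) sum with the expectation on the left-hand side, obtaining $\sum_{a,s'} \pi(a \mid s)\,\E_{p \sim \Phi_t}\bracket{p(s' \mid s,a)\,V^{\pi,p}(s')}$. The crux is then the termwise identity
\begin{equation}
  \E_{p \sim \Phi_t}\bracket{p(s' \mid s,a)\,V^{\pi,p}(s')} = \E_{p \sim \Phi_t}\bracket{p(s' \mid s,a)}\,\E_{p \sim \Phi_t}\bracket{V^{\pi,p}(s')},
\end{equation}
which is precisely the statement that $\Cov[p(s' \mid s,a),\,V^{\pi,p}(s')] = 0$, i.e.\ exactly \cref{lemma:independence_from_assumptions} applied for the fixed $(s,a,s')$. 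Substituting this factorization back, each summand becomes $\pi(a \mid s)\,\bar{p}_t(s' \mid s,a)\,\bar V_t^\pi(s') = \bar p_t(a,s'\mid s)\,\E_{p\sim\Phi_t}\bracket{V^{\pi,p}(s')}$, and summing over $a,s'$ yields the right-hand side of \cref{eq:uncorrelated_property}.

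There is no real obstacle here: the lemma is essentially a bookkeeping restatement of \cref{lemma:independence_from_assumptions}, and the only points requiring (trivial) care are the interchange of the finite sum with the expectation and the observation that zero covariance is equivalent to the expectation of the product factoring. I would keep the proof to a couple of lines, emphasizing that the step from $\E[XY]$ to $\E[X]\E[Y]$ is the single place where \cref{assumption:transitions,assumption:acyclic} enter, through \cref{lemma:independence_from_assumptions}.
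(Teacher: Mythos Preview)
Your proposal is correct and matches the paper's own proof essentially line for line: both invoke the identity $\E[XY] = \Cov[X,Y] + \E[X]\E[Y]$ together with \cref{lemma:independence_from_assumptions} and the definition of $\bar{p}_t$ in \cref{eq:mean_model}. The paper's version is simply terser, omitting the explicit unfolding of $p(a,s'\mid s)=\pi(a\mid s)p(s'\mid s,a)$ and the remark about exchanging the finite sum with the expectation.
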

\begin{proof}
  For any pair of random variables $X$ and $Y$ on the same probability space, by definition of
  covariance it holds that $\E[XY] = \Cov[X,Y] + \E[X]\E[Y]$. Using this identity with
  \cref{lemma:independence_from_assumptions} and the definition of posterior mean transition
  \cref{eq:mean_model} yields the result.
\end{proof}

Now we are ready to prove the main theorem.
\ube*

\begin{proof}
  Starting from the result in \cref{lemma:variance_decomp_no_assumptions}, we consider each term on
  the r.h.s of \cref{eq:variance_decomposition} separately. For the first term, notice that within
  the expectation we have a squared expectation over the transition probability $p(s' \mid s,a)$,
  thus using the identity $(\E[Y])^2 = \E[Y^2] - \V[Y]$ results in
  \begin{align}
    \E_{p \sim \Phi_t} \bracket{
      \paren{
        \sum_{a,s'}p(a,s' \mid s) V^{\pi, p}(s')
      }^2
    } &= 
    \E_{p \sim \Phi_t} \bracket{
      \sum_{a,s'}p(a,s' \mid s)\paren{V^{\pi,p}(s')}^2 -
      \V_{a,s' \sim \pi, p} \bracket{V^{\pi,p}(s')}
    }.  \intertext{Applying linearity of expectation to bring it inside the sum
    and an application of \cref{lemma:uncorrelated_property} (note that the lemma applies for squared values as well) gives}
    &= 
    \label{eq:thm1_first_term}
    \sum_{a,s'}\bar{p}_t(a,s' \mid s)\E_{p \sim \Phi_t}\bracket{\paren{V^{\pi,p}(s')}^2} - \E_{p \sim \Phi_t}\bracket{\V_{a,s' \sim \pi,p} \bracket{V^{\pi,p}(s')}
    }.
  \end{align}
  For the second term of the r.h.s of \cref{eq:variance_decomposition} we apply again
  \cref{lemma:uncorrelated_property} and under definition of variance
  \begin{align}
    \paren{
      \E_{p \sim \Phi_t} \bracket{
        \sum_{a,s'}p(a,s' \mid s) V^{\pi, p}(s')
      }
    }^2 \label{eq:prev_thm1_second_term}
    &= \paren{
      \sum_{a,s'}\bar{p}_t(a,s' \mid s)\E_{p \sim \Phi_t}\bracket{V^{\pi,p}(s')}
    }^2 \\
    &= \label{eq:thm1_second_term} \sum_{a,s'}\bar{p}_t(a,s' \mid s)\paren{
      \E_{p \sim \Phi_t}\bracket{V^{\pi,p}(s')}
    }^2 - \V_{a,s' \sim\pi, \bar{p}_t}\bracket{\E_{p \sim \Phi_t}\bracket{V^{\pi,p}(s')}}.
  \end{align}
  Finally, since
  \begin{equation}
    \E_{p \sim \Phi_t}\bracket{\paren{V^{\pi,p}(s')}^2} -
    \paren{
      \E_{p \sim \Phi_t}\bracket{V^{\pi,p}(s')}
    }^2  = \V_{p \sim \Phi_t}\bracket{V^{\pi,p}(s')}
  \end{equation}
  for any $s' \in \mathcal{S}$, we can plug \cref{eq:thm1_first_term,eq:thm1_second_term} into
  \cref{eq:variance_decomposition}, which proves the theorem.
\end{proof}

\subsection{Proof of \cref{thm:connection_uncertainties}}
\label{app:proofs_thm_connections}
In this section, we provide the supporting theory and the proof of
\cref{thm:connection_uncertainties}. First, we will use the identity $\V[\E[Y|X]] = \E[(\E[Y|X])^2]
- (\E[E[Y|X]])^2$ to prove $u_t(s) = w_t(s) - g_t(s)$ holds, with ${Y = \sum_{a,s'}p(a,s' \mid
s)V^{\pi,p}(s')}$. For the conditioning variable $X$, we define a transition function with fixed
input state $s$ as a mapping $p_s : \mathcal{A} \to \Delta(S)$ representing a distribution $p_s(s'
\mid a) = p(s' \mid s,a)$. Then $X = \mathbf{P}_s := \set{p_s(s' \mid a)}_{s' \in \mathcal{S}, a \in
\mathcal{A}}$. The transition function $p_s$ is drawn from a distribution $\Phi_{s,t}$ obtained by
marginalizing $\Phi_t$ on all transitions not starting from $s$. 
\begin{lemma}
  \label{lemma:pombu_local_uncertainty}
  Under \cref{assumption:transitions,assumption:acyclic}, it holds that 
  \begin{equation}
    \V_{p_s \sim \Phi_{s,t}} \bracket{\E_{p \sim \Phi_t} \bracket{\sum_{a,s'} p(a,s' \mid s)V^{\pi,p}(s') \given \mathbf{P}_s}} = \V_{p \sim \Phi_t} \bracket{\sum_{a,s'} p(a,s' \mid s)\bar{V}^\pi_t(s')}.
  \end{equation}
\end{lemma}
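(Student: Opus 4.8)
The plan is to compute the inner conditional expectation in closed form and then recognise what the remaining randomness is. Conditioning on $\mathbf{P}_s$ fixes every joint probability $p(a,s' \mid s) = \pi(a \mid s)p(s' \mid s,a)$, since these are exactly the coordinates of $\mathbf{P}_s$ and $\pi$ is held fixed throughout; hence by linearity of expectation,
\begin{equation}
  \E_{p \sim \Phi_t}\bracket{\sum_{a,s'}p(a,s' \mid s)V^{\pi,p}(s') \given \mathbf{P}_s} = \sum_{a,s'}p(a,s' \mid s)\,\E_{p \sim \Phi_t}\bracket{V^{\pi,p}(s') \given \mathbf{P}_s}.
\end{equation}
So the lemma reduces to showing that the conditioning can be dropped from the remaining expectation, i.e. $\E_{p \sim \Phi_t}[V^{\pi,p}(s') \mid \mathbf{P}_s] = \bar{V}^\pi_t(s')$.

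To get that, I would strengthen the argument behind \cref{lemma:independence_from_assumptions}: rather than just asserting that $V^{\pi,p}(s')$ is uncorrelated with the single coordinate $p(s' \mid s,a)$, I claim $V^{\pi,p}(s')$ is independent of the \emph{entire} block $\mathbf{P}_s$. By \cref{assumption:acyclic}, any episode visits $s$ at most once, so a trajectory $\tau = \set{s', a_0, s_1, \dots}$ started from a successor $s'$ of $s$ never revisits $s$; therefore $V^{\pi,p}(s')$ is a measurable function only of the transition rows $\set{p(\cdot \mid x,a)}$ with $x \neq s$. By \cref{assumption:transitions} those rows are independent of the rows starting from $s$, i.e. of $\mathbf{P}_s$. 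Consequently $\E_{p \sim \Phi_t}[V^{\pi,p}(s') \mid \mathbf{P}_s] = \E_{p \sim \Phi_t}[V^{\pi,p}(s')] = \bar{V}^\pi_t(s')$ by \cref{eq:mean_value_function}, and the inner conditional expectation collapses to $\sum_{a,s'}p(a,s' \mid s)\bar{V}^\pi_t(s')$.

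Finally, $\sum_{a,s'}p(a,s' \mid s)\bar{V}^\pi_t(s')$ is a deterministic function of $\mathbf{P}_s$ alone. Since $\Phi_{s,t}$ is by definition the law of $\mathbf{P}_s$ induced by $p \sim \Phi_t$ (obtained by marginalising out all transitions not starting from $s$), taking $\V_{p_s \sim \Phi_{s,t}}[\cdot]$ of $\sum_{a,s'}p_s(a,s' \mid s)\bar{V}^\pi_t(s')$ is identical to $\V_{p \sim \Phi_t}[\sum_{a,s'}p(a,s' \mid s)\bar{V}^\pi_t(s')]$, which is precisely the right-hand side of the claim.

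The main obstacle is the middle step: upgrading \cref{lemma:independence_from_assumptions} from ``$V^{\pi,p}(s')$ uncorrelated with $p(s' \mid s,a)$'' to ``$V^{\pi,p}(s')$ independent of the whole block $\mathbf{P}_s$'', which is exactly what licenses removing the conditioning. This rests on the acyclicity observation that successors of $s$ can never loop back to $s$ (so $V^{\pi,p}(s')$ reads none of the rows in $\mathbf{P}_s$), combined with the block-independence of \cref{assumption:transitions}; once that is established, everything else is bookkeeping about which random object the surviving expression depends on.
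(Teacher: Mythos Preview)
Your proposal is correct and follows essentially the same three-step route as the paper: pull the conditionally deterministic $p(a,s'\mid s)$ out of the inner expectation, drop the conditioning on $\mathbf{P}_s$ via independence of $V^{\pi,p}(s')$ and $\mathbf{P}_s$, and then identify the variance under the marginal $\Phi_{s,t}$ with the variance under $\Phi_t$. The only notable difference is that you explicitly argue for independence of $V^{\pi,p}(s')$ from the \emph{whole} block $\mathbf{P}_s$, whereas the paper simply invokes \cref{lemma:independence_from_assumptions} (whose literal conclusion is zero covariance with a single coordinate) and leaves that strengthening implicit; your version is slightly more careful on this point but otherwise identical in spirit.
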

\begin{proof}
  Treating the inner expectation,
  \begin{align}
    \E_{p \sim \Phi_t} \bracket{\sum_{a,s'} p(a,s' \mid s)V^{\pi,p}(s')\mid \textbf{P}_s} &= \sum_{a}\pi(a \mid s) \sum_{s'}\E_{p \sim \Phi_t}\bracket{p(s' \mid s,a)V^{\pi, p}(s') \given \mathbf{P}_s}. 
    \intertext{Due to the conditioning, $p(s' \mid s,a)$ is deterministic within the expectation}
    &= \sum_{a,s'} p(a,s' \mid s) \E_{p \sim \Phi_t}\bracket{V^{\pi,p}(s') \given \mathbf{P}_s}.
    \intertext{By \cref{lemma:independence_from_assumptions}, $V^{\pi,p}(s')$ is independent of $\mathbf{P}_s$, so we can drop the conditioning}
    &= \sum_{a,s'} p(a,s' \mid s)\bar{V}^\pi_t(s').
  \end{align}
  Lastly, since drawing samples from a marginal distribution is equivalent to drawing samples from
  the joint, i.e., $\V_x[f(x)] = \V_{(x,y)}[f(x)]$, then:
  \begin{equation}
    \V_{p_s \sim \Phi_{s,t}}\bracket{\sum_{a,s'} p(a,s' \mid s)\bar{V}^\pi_t(s')} = \V_{p \sim \Phi_t}\bracket{\sum_{a,s'} p(a,s' \mid s)\bar{V}^\pi_t(s')},
  \end{equation}
  completing the proof.
\end{proof}

The next lemma establishes the result for the expression $\E[(\E[Y|X])^2]$.
\begin{lemma}
  \label{lemma:rhs_pombu_uncertainty_first}
  Under \cref{assumption:transitions,assumption:acyclic}, it holds that 
  \begin{align}
    \E_{p_s \sim \Phi_{s,t}} \bracket{\paren{\E_{p \sim \Phi_t} \bracket{\sum_{a,s'}p(a,s' \mid s)V^{\pi,p}(s')\given \mathbf{P}_s}}^2} &= \sum_{a,s'}\bar{p}_t(a,s' \mid s)\paren{\bar{V}^\pi_t(s')} - \E_{p \sim \Phi_t}\bracket{\V_{a,s' \sim \pi,p} \bracket{\bar{V}^\pi_t(s')}}.
  \end{align}
  \begin{proof}
    The inner expectation is equal to the one in \cref{lemma:pombu_local_uncertainty}, so we have
    that 
    \begin{align}
      \paren{\E_{p \sim \Phi_t} \bracket{\sum_{a,s'}p(a,s' \mid s)V^{\pi,p}(s')\given \mathbf{P}_s}}^2 &= \paren{\sum_{a,s'}p(a,s' \mid s) \bar{V}^\pi_t(s')}^2 \\
      &= \sum_{a,s'}p(a,s' \mid s)\paren{\bar{V}^\pi_t(s')}^2 - \V_{a,s' \sim \pi,p}\bracket{\bar{V}^\pi_t(s')}. \label{eq:term_no_exp}
    \end{align}
    Finally, applying expectation on both sides of \cref{eq:term_no_exp} yields the result.
  \end{proof}
\end{lemma}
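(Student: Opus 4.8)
The plan is to peel the expression from the inside out: reuse the conditional-expectation computation already carried out inside the proof of \cref{lemma:pombu_local_uncertainty}, then apply a single variance identity to expand the square, and finally push the outer expectation through term by term. First I would recall that the inner quantity $\E_{p \sim \Phi_t}\bracket{\sum_{a,s'}p(a,s' \mid s)V^{\pi,p}(s') \given \mathbf{P}_s}$ is exactly the object handled at the start of \cref{lemma:pombu_local_uncertainty}: conditioning on $\mathbf{P}_s$ freezes the transition probabilities $p(s' \mid s,a)$ so they pull out of the expectation, and \cref{lemma:independence_from_assumptions} lets me drop the conditioning on $V^{\pi,p}(s')$, leaving $\sum_{a,s'}p(a,s' \mid s)\bar{V}^\pi_t(s')$.

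Next I would square this and read the square through the variance identity $(\E_Z[f])^2 = \E_Z[f^2] - \V_Z[f]$, taking the ``measure'' $Z$ to be $(a,s') \sim \pi, p$, i.e.\ treating $\sum_{a,s'}p(a,s' \mid s)(\cdot)$ as an expectation over next-state draws. This gives $\paren{\sum_{a,s'}p(a,s' \mid s)\bar{V}^\pi_t(s')}^2 = \sum_{a,s'}p(a,s' \mid s)\paren{\bar{V}^\pi_t(s')}^2 - \V_{a,s' \sim \pi,p}\bracket{\bar{V}^\pi_t(s')}$, the only non-routine algebraic step, matching the intermediate equation in the stated proof. (Note that the first summand on the right-hand side of the lemma should carry a square, $\paren{\bar{V}^\pi_t(s')}^2$.)

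Finally I would apply the outer expectation $\E_{p_s \sim \Phi_{s,t}}$ to both sides and treat the two summands separately. For the first, $\bar{V}^\pi_t(s')$ is deterministic---it is itself an expectation over $\Phi_t$---so linearity leaves only the transition probabilities random, and the definition of the mean model \cref{eq:mean_model} collapses $\E_{p_s \sim \Phi_{s,t}}\bracket{\sum_{a,s'}p(a,s' \mid s)\paren{\bar{V}^\pi_t(s')}^2}$ to $\sum_{a,s'}\bar{p}_t(a,s' \mid s)\paren{\bar{V}^\pi_t(s')}^2$. For the second, I would observe that $\V_{a,s' \sim \pi,p}\bracket{\bar{V}^\pi_t(s')}$ is a function of $\mathbf{P}_s$ alone, so averaging it over the marginal $\Phi_{s,t}$ equals averaging it over the full posterior $\Phi_t$---the same marginal-equals-joint step that closes \cref{lemma:pombu_local_uncertainty}---yielding $\E_{p \sim \Phi_t}\bracket{\V_{a,s' \sim \pi,p}\bracket{\bar{V}^\pi_t(s')}}$.

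The main obstacle is bookkeeping rather than a deep idea: I must keep straight which layer of randomness each object controls. The subtle point is that once the inner conditional expectation is resolved, the only surviving randomness lives in the transitions leaving $s$ (governed by $\Phi_{s,t}$), since $\bar{V}^\pi_t(s')$ carries no dependence on $p$. Justifying the two distinct collapses of the outer expectation---one through the mean-model definition, one through the marginal-to-joint identity---is precisely where care is needed to avoid conflating $\Phi_{s,t}$ with $\Phi_t$.
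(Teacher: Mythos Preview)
Your proposal is correct and follows essentially the same route as the paper: simplify the inner conditional expectation via \cref{lemma:pombu_local_uncertainty}, expand the square using $(\E[f])^2 = \E[f^2] - \V[f]$ over the next-state distribution, and then push through the outer expectation term by term. You even spell out the two collapses of the outer expectation more carefully than the paper does, and you are right that the first summand in the lemma statement is missing a square.
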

Similarly, the next lemma establishes the result for the expression $(\E[\E[Y|X]])^2$.
\begin{lemma}
  \label{lemma:rhs_pombu_uncertainty_second}
  Under \cref{assumption:transitions,assumption:acyclic}, it holds that 
  \begin{align}
    \paren{\E_{p_s \sim \Phi_{s,t}} \bracket{\E_{p \sim \Phi_t} \bracket{\sum_{a,s'}p(a,s' \mid s)V^{\pi,p}(s')\given \mathbf{P}_s}}^2} &= \sum_{a,s'}\bar{p}_t(a,s' \mid s)\paren{\bar{V}^\pi_t(s')} - \V_{a,s' \sim \pi,\bar{p}_t} \bracket{\bar{V}^\pi_t(s')}.
  \end{align}
  \begin{proof}
    By the tower property of expectations, $(\E[\E[Y|X]])^2 = (\E[Y])^2$. Then, the result follows
    directly from \cref{eq:prev_thm1_second_term,eq:thm1_second_term}.
  \end{proof}
\end{lemma}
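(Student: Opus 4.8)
The plan is to collapse the nested expectation on the left-hand side using the law of total expectation, and then recognize the resulting quantity as the squared posterior mean already computed within the proof of \cref{thm:ube}.

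First I would set $Y = \sum_{a,s'} p(a,s' \mid s) V^{\pi,p}(s')$ and recall that, by construction, $\Phi_{s,t}$ is the marginal law of the conditioning variable $\mathbf{P}_s$ under $\Phi_t$. The inner conditional expectation $\E_{p \sim \Phi_t}\bracket{Y \mid \mathbf{P}_s}$ is a function of $\mathbf{P}_s$, so it composes with the outer average over $p_s \sim \Phi_{s,t}$ through the tower property to yield $\E_{p_s \sim \Phi_{s,t}}\bracket{\E_{p \sim \Phi_t}\bracket{Y \mid \mathbf{P}_s}} = \E_{p \sim \Phi_t}\bracket{Y}$. Squaring both sides identifies the whole left-hand side of the lemma with $\paren{\E_{p \sim \Phi_t}\bracket{\sum_{a,s'} p(a,s' \mid s) V^{\pi,p}(s')}}^2$, which is precisely the object appearing on the left of \cref{eq:prev_thm1_second_term}.

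It then remains to invoke the computation already carried out in the proof of \cref{thm:ube}. The chain from \cref{eq:prev_thm1_second_term} to \cref{eq:thm1_second_term} first pulls $\bar{p}_t$ out of the squared sum using \cref{lemma:uncorrelated_property} together with the definition $\bar{V}^\pi_t(s') = \E_{p \sim \Phi_t}\bracket{V^{\pi,p}(s')}$, and then applies the variance identity $\paren{\E\bracket{Z}}^2 = \E\bracket{Z^2} - \V\bracket{Z}$ to $Z = \bar{V}^\pi_t(s')$ regarded as a random variable distributed according to $\bar{p}_t(\cdot \mid s)$. This reproduces exactly the right-hand side of \cref{eq:thm1_second_term}, namely $\sum_{a,s'}\bar{p}_t(a,s' \mid s)\paren{\bar{V}^\pi_t(s')}^2 - \V_{a,s' \sim \pi,\bar{p}_t}\bracket{\bar{V}^\pi_t(s')}$, which is the claimed right-hand side, completing the lemma by substitution.

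The only point requiring care is verifying that $\Phi_{s,t}$ really is the marginal distribution of $\mathbf{P}_s$, so that the two nested measures match and the tower property applies with no leftover conditioning; this is immediate from the definition of $\Phi_{s,t}$ as the marginalization of $\Phi_t$ over all transitions not starting from $s$. Beyond this bookkeeping the argument is a direct reuse of identities already established, so I anticipate no substantive obstacle.
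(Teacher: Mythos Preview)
Your proposal is correct and follows essentially the same approach as the paper's own proof: apply the tower property to collapse $(\E[\E[Y\mid X]])^2$ to $(\E[Y])^2$, then cite \cref{eq:prev_thm1_second_term,eq:thm1_second_term}. Your write-up simply spells out in more detail why the tower property applies (via the marginal $\Phi_{s,t}$) and what the chain from \cref{eq:prev_thm1_second_term} to \cref{eq:thm1_second_term} actually does, and you correctly restore the missing square on $\bar{V}^\pi_t(s')$ in the right-hand side.
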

The second part of \cref{thm:connection_uncertainties} is a corollary of the next lemma.
\begin{lemma}
  \label{lemma:inflated_uncertainty}
  Under \cref{assumption:transitions,assumption:acyclic}, it holds that
  \begin{equation}
    \label{eq:non_negative_quantity}
    \E_{p \sim \Phi_t}\bracket{\V_{a, s' \sim \pi, p}\bracket{V^{\pi,p}(s')} - \V_{a,s' \sim \pi, p}\bracket{\bar{V}^\pi_t(s')}}
  \end{equation}
  is non-negative.
\end{lemma}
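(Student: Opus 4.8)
The plan is to recognize the quantity in \cref{eq:non_negative_quantity} as an instance of Jensen's inequality for the variance functional. First I would note that the integrands inside both inner variances depend on the pair $(a,s')$ only through $s'$, so for any fixed transition function $p$ each of these variances equals $\V_{s'\sim\mu_p}[\cdot]$, where $\mu_p(s') = \sum_a \pi(a\mid s)p(s'\mid s,a)$ is the one-step successor distribution at $s$; passing from the joint $p(a,s'\mid s)$ to $\mu_p$ loses nothing. The crucial structural fact, which I would take from the conditioning device used in the proof of \cref{lemma:pombu_local_uncertainty}, is that $\mu_p$ is a function of the \emph{local} transition parameters $\mathbf{P}_s = \{p(s'\mid s,a)\}_{s',a}$ at state $s$, whereas by \cref{lemma:independence_from_assumptions} the successor value $V^{\pi,p}(s')$ is independent of $\mathbf{P}_s$. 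Consequently $\bar V_t^\pi(s') = \E_{p\sim\Phi_t}[V^{\pi,p}(s')] = \E_{p\sim\Phi_t}[V^{\pi,p}(s') \mid \mathbf{P}_s]$: the mean value function is precisely the conditional mean of the random function $V^{\pi,p}$ given $\mathbf{P}_s$.

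Next I would establish the single ingredient that makes Jensen applicable: for every fixed probability vector $\mu$ on $\mathcal{S}$, the map $f \mapsto \V_\mu[f] = \sum_{s'}\mu(s')f(s')^2 - \paren{\sum_{s'}\mu(s')f(s')}^2$ is a convex functional on $\R^{\mathcal{S}}$. Indeed its Hessian equals $2\paren{\mathrm{diag}(\mu) - \mu\mu^\top}$, which is positive semidefinite because $v^\top\paren{\mathrm{diag}(\mu)-\mu\mu^\top}v = \V_\mu[v] \ge 0$ for all $v$. With this in hand the argument is a short chain: conditioning on $\mathbf{P}_s$ fixes $\mu_p$, so applying Jensen's inequality to the convex functional $\V_{\mu_p}[\cdot]$ and the expectation $\E_{p\sim\Phi_t}[\,\cdot\mid \mathbf{P}_s]$ gives
\[
\V_{a,s'\sim\pi,p}\bracket{\bar V_t^\pi(s')} = \V_{\mu_p}\bracket{\E_{p\sim\Phi_t}\bracket{V^{\pi,p} \mid \mathbf{P}_s}} \le \E_{p\sim\Phi_t}\bracket{\V_{\mu_p}\bracket{V^{\pi,p}} \mid \mathbf{P}_s} = \E_{p\sim\Phi_t}\bracket{\V_{a,s'\sim\pi,p}\bracket{V^{\pi,p}(s')} \mid \mathbf{P}_s}.
\]
Taking expectations over $\mathbf{P}_s$ and invoking the tower property — noting that the left-hand side depends on $p$ only through $\mathbf{P}_s$, so its outer expectation is unchanged — yields $\E_{p\sim\Phi_t}\bracket{\V_{a,s'\sim\pi,p}\bracket{\bar V_t^\pi(s')}} \le \E_{p\sim\Phi_t}\bracket{\V_{a,s'\sim\pi,p}\bracket{V^{\pi,p}(s')}}$, which is exactly the assertion that \cref{eq:non_negative_quantity} is non-negative. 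Boundedness of the reward function together with $\gamma < 1$ supplies the integrability needed for all these interchanges.

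I expect the only genuine obstacle to be the bookkeeping around the conditioning: one must choose the conditioning $\sigma$-field so that it simultaneously renders $\mu_p$ deterministic \emph{and} leaves the conditional mean of $V^{\pi,p}$ equal to $\bar V_t^\pi$, which is exactly where \cref{assumption:transitions,assumption:acyclic} enter, through \cref{lemma:independence_from_assumptions} and the marginalization already exploited in \cref{lemma:pombu_local_uncertainty}. An alternative that avoids introducing $\mathbf{P}_s$ explicitly is to expand both variances with $\V[Y]=\E[Y^2]-(\E Y)^2$, use \cref{lemma:uncorrelated_property} to split off the non-negative term $\sum_{s'}\bar p_t$-weighted $\V_p[V^{\pi,p}(s')]$, and bound the residual difference of two variances over $\Phi_t$; this reaches the same conclusion but is appreciably more computational, so I would present the conditioning argument above.
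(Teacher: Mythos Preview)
Your proposal is correct and takes a genuinely different route from the paper. The paper does not invoke Jensen's inequality; instead it computes $\E[\V[Y\mid X]]$ two ways (once via $\E[\E[(Y-\E[Y\mid X])^2\mid X]]$ and once via $\E[\E[Y^2\mid X]-(\E[Y\mid X])^2]$, with $Y=\sum_{a,s'}p(a,s'\mid s)V^{\pi,p}(s')$ and $X=\mathbf{P}_s$) and equates the resulting expressions to obtain the \emph{identity}
\[
\E_{p\sim\Phi_t}\bracket{\V_{a,s'\sim\pi,p}\bracket{V^{\pi,p}(s')}-\V_{a,s'\sim\pi,p}\bracket{\bar V_t^\pi(s')}}=\E_{p\sim\Phi_t}\bracket{\V_{a,s'\sim\pi,p}\bracket{V^{\pi,p}(s')-\bar V_t^\pi(s')}},
\]
from which non-negativity is immediate. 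Your argument is shorter and more conceptual: once the conditioning on $\mathbf{P}_s$ is set up (the same structural step both proofs need), convexity of $f\mapsto\V_\mu[f]$ plus conditional Jensen gives the inequality in one line. What you lose relative to the paper is the explicit variance representation of the gap, which the paper reuses later (e.g., in the $Q$-value analogue of \cref{thm:connection_uncertainties}); your approach proves exactly the lemma as stated but not this sharper equality.
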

\begin{proof}
  We will prove the lemma by showing \cref{eq:non_negative_quantity} is equal to $\E_{p \sim
  \Phi_t}\bracket{\V_{a,s' \sim \pi,p}\bracket{V^{\pi,p}(s') - \bar{V}^\pi_t(s')}}$, which is a
  non-negative quantiy by definition of variance. The idea is to derive two expressions for
  $\E[\V[Y|X]]$ and compare them. First, we will use the identity $\E[\V[Y|X]] = \E[\E[(Y - \E[Y|X])^2
  | X]]$. The outer expectation is w.r.t the marginal distribution $\Phi_{s,t}$ while the inner
  expectations are w.r.t $\Phi_t$. For the inner expectation we have
  \begin{align}
    &
    \E_{p \sim \Phi_t} \bracket{\paren{
      \sum_{a,s'}p(a,s' \mid s)V^{\pi, p}(s') - \E_{p \sim \Phi_t}\bracket{\sum_{a,s'}p(a,s' \mid s)V^{\pi,p}(s') \given \mathbf{P}_s}
    }^2\given \mathbf{P}_s} \\
    &= \E_{p \sim \Phi_t} \bracket{\paren{
      \sum_{a,s'}p(a,s' \mid s) \paren{
        V^{\pi, p}(s') - \E_{p\sim \Phi_t}\bracket{V^{\pi,p}\given \mathbf{P}_s}
      }}^2 \given \mathbf{P}_s} \\
    &= \E_{p \sim \Phi_t} \bracket{\paren{
      \sum_{a,s'}p(a,s' \mid s) \paren{
        V^{\pi, p}(s') - \bar{V}^\pi_t(s')
      }}^2 \given \mathbf{P}_s} \\
    &= \E_{p \sim \Phi_t} \bracket{
      \sum_{a,s'}p(a,s' \mid s)\paren{V^{\pi,p}(s') - \bar{V}^\pi_t(s')}^2 - \V_{a,s' \sim \pi,p}\bracket{V^{\pi,p}(s') - \bar{V}^\pi_t(s')} \given \mathbf{P}_s} \\
    &= \sum_{a,s'}p(a,s' \mid s) \V_{p \sim \Phi_t}\bracket{V^{\pi, p}(s')} - \E_{p \sim \Phi_t}\bracket{\V_{a,s' \sim \pi,p}\bracket{V^{\pi,p}(s') - \bar{V}^\pi_t(s')} \given \mathbf{P}_s}.
  \end{align}
  Applying the outer expectation to the last equation, along with
  \cref{lemma:independence_from_assumptions} and the tower property of expectations yields:
  \begin{equation}
    \label{eq:first_equivalence_total_variance}
    \E[\V[Y|X]] = \sum_{a,s'}\bar{p}_t(a,s' \mid s)\V_{p \sim \Phi_t}\bracket{V^{\pi, p}(s')} - \E_{p \sim \Phi_t}\bracket{\V_{a,s' \sim \pi,p}\bracket{V^{\pi,p}(s') - \bar{V}^\pi_t(s')}}.
  \end{equation}
  Now we repeat the derivation but using $\E[\V[Y|X]] = \E[\E[Y^2|X] - (\E[Y|X])^2]$. For the inner
  expectation of the first term we have:
  \begin{align}
    &\E_{p \sim \Phi_t} \bracket{
      \paren{
        \sum_{a,s'}p(a,s' \mid s)V^{\pi,p}(s')
      }^2 \given \mathbf{P}_s
    }\\
    &= 
    \E_{p \sim \Phi_t} \bracket{
      \sum_{a,s'}p(a,s' \mid s)\paren{V^{\pi,p}(s')}^2 - \V_{a,s' \sim \pi,p}\bracket{V^{\pi,p}(s')} \given \mathbf{P}_s
    }.
  \end{align}
  Applying the outer expectation:
  \begin{equation}
    \label{eq:second_equivalence_total_variance}
    \E[\E[Y^2|X]] = \sum_{a,s'}\bar{p}_t(a,s' \mid s)\E_{p \sim
    \Phi_t}\bracket{\paren{V^{\pi,p}(s')}^2} - \E_{p \sim
    \Phi_t}\bracket{\V_{a,s' \sim \pi,p}\bracket{V^{\pi,p}(s')}}.
  \end{equation}
  Lastly, for the inner expectation of $\E[(\E[Y|X])^2]$:
  \begin{align}
    \paren{\E_{p \sim \Phi_t} \bracket{
        \sum_{a,s'}p(a,s' \mid s)V^{\pi,p}(s')
      \given \mathbf{P}_s}
    }^2 &= 
    \paren{
      \sum_{a,s'}p(a,s' \mid s) \bar{V}^\pi_t(s')
    }^2 \\
    &=
    \sum_{a,s'}p(a,s' \mid s)\paren{\bar{V}^\pi_t(s')}^2 - \V_{a,s' \sim \pi,p}\bracket{\bar{V}^\pi_t(s')}.
  \end{align}
  Applying the outer expectation:
  \begin{equation}
    \label{eq:third_equivalence_total_variance}
    \E[(\E[Y|X])^2] = \sum_{a,s'}\bar{p}_t(a,s' \mid s)\paren{\bar{V}^\pi_t(s')}^2 - \E_{p \sim \Phi_t}\bracket{\V_{a,s' \sim \pi,p}\bracket{\bar{V}_t^{\pi}(s')}}.
  \end{equation}
  Finally, by properties of variance, \cref{eq:first_equivalence_total_variance} =
  \cref{eq:second_equivalence_total_variance} - \cref{eq:third_equivalence_total_variance} which
  gives the desired result.
\end{proof}
\connections*
\begin{proof}
  By definition of $u_t(s)$ in \cref{eq:bellman_exact_reward}, proving the claim is equivalent to
  showing
  \begin{equation}
    \label{eq:thm2_proxy_equivalence}
    \V_{a,s' \sim \pi, \bar{p}_t}\bracket{\bar{V}^\pi_t(s')} = w_t(s) + \E_{p \sim \Phi_t}\bracket{\V_{a,s' \sim \pi, p}\bracket{\bar{V}^\pi_t(s')}},
  \end{equation}
  which holds by combining
  \cref{lemma:pombu_local_uncertainty,lemma:rhs_pombu_uncertainty_first,lemma:rhs_pombu_uncertainty_second}.
  Lastly, $u_t(s) \leq w_t(s)$ holds by \cref{lemma:inflated_uncertainty}.
\end{proof}

\section{THEORY EXTENSIONS}
\subsection{Unknown Reward Function}
\label{app:unknown_rewards}
We can easily extend the derivations on \cref{app:proofs_thm_ube} to include the additional
uncertainty coming from an \emph{unknown} reward function. Similarly, we assume the reward function
is a random variable $r$ drawn from a prior distribution $\Psi_0$, and whose belief will be updated
via Bayes rule. In this new setting, we now consider the variance of the values under the
distribution of MDPs, represented by the random variable $\mathcal{M}$. We need the following
additional assumptions to extend our theory.
\begin{assumption}[Independent rewards]
    \label{assumption:indep_rewards}
    $r(x,a)$ and $r(y,a)$ are independent random variables if $x\neq y$.
\end{assumption}
\begin{assumption}[Independent transitions and rewards]
    \label{assumption:indep_transit_rewards}
    The random variables $p(\cdot \mid s,a)$ and $r(s,a)$ are independent for any $(s,a)$.
\end{assumption}
With \cref{assumption:indep_rewards} we have that the value function of next states is independent
of the transition function and reward function at the current state.
\cref{assumption:indep_transit_rewards} means that sampling $\mathcal{M} \sim \Gamma_t$
is equivalent as independently sampling $p \sim \Phi_t$ and $r \sim \Psi_t$.

\begin{restatable}{theorem}{ube_unknown_reward}
  \label{thm:ube_unknown_rewards}
  Under \crefrange{assumption:transitions}{assumption:indep_transit_rewards}, for any
  $s \in \mathcal{S}$ and policy $\pi$, the posterior variance of the value function, $U_t^\pi =
  \V_{\mathcal{M} \sim \Gamma_t} \bracket{V^{\pi,\mathcal{M}}}$ obeys the uncertainty Bellman equation
  \begin{equation}
    U_t^\pi(s) = \V_{r \sim \Psi_t} \bracket{
      \sum_a \pi(a \mid s) r(s,a)
    } + 
    \gamma ^ 2u_t(s) + \gamma^2\sum_{a, s'}\pi(a \mid s)\bar{p}_t(s' \mid s,a) U_t^\pi(s'),
  \end{equation}
  where $u_t(s)$ is defined in \cref{eq:bellman_exact_reward}.
\end{restatable}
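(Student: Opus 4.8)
The plan is to mirror the proof of \cref{thm:ube} in \cref{app:proofs_thm_ube}, carrying along the extra randomness coming from the reward. Starting from the Bellman expectation equation for the random MDP,
\[
V^{\pi,\mathcal{M}}(s) = \underbrace{\sum_a \pi(a\mid s)\, r(s,a)}_{=:A} \;+\; \gamma \underbrace{\sum_{a,s'} \pi(a\mid s)\, p(s'\mid s,a)\, V^{\pi,\mathcal{M}}(s')}_{=:B},
\]
I would first expand $\V_{\mathcal{M}\sim\Gamma_t}\bracket{V^{\pi,\mathcal{M}}(s)} = \V\bracket{A} + 2\gamma\,\Cov\bracket{A,B} + \gamma^2\V\bracket{B}$ and argue that the cross term vanishes, reducing the problem to analysing $\V[A]$ and $\gamma^2\V[B]$ separately.

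The key new ingredient is the analogue of \cref{lemma:independence_from_assumptions}: since any trajectory rooted at $s'$ never revisits $s$ by \cref{assumption:acyclic}, the value $V^{\pi,\mathcal{M}}(s')$ is a deterministic function of $\set{p(\cdot\mid x,\cdot), r(x,\cdot)}_{x\neq s}$ only; combining \cref{assumption:transitions,assumption:indep_rewards} (independence across states) with \cref{assumption:indep_transit_rewards} (within-state independence of $p$ and $r$), the pair $(r(s,\cdot), p(\cdot\mid s,\cdot))$ is independent of $V^{\pi,\mathcal{M}}(s')$, and $r(s,\cdot)$ is independent of $p(\cdot\mid s,\cdot)$. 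Consequently $A$ is independent of $B$, so $\Cov\bracket{A,B}=0$ and $\V\bracket{V^{\pi,\mathcal{M}}(s)} = \V\bracket{A} + \gamma^2\V\bracket{B}$. For the first term, \cref{assumption:indep_transit_rewards} means sampling $\mathcal{M}\sim\Gamma_t$ is the same as sampling $p\sim\Phi_t$ and $r\sim\Psi_t$ independently, so $\V_{\mathcal{M}\sim\Gamma_t}\bracket{A} = \V_{r\sim\Psi_t}\bracket{\sum_a\pi(a\mid s)r(s,a)}$, which is exactly the first term of the claimed UBE.

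For $\gamma^2\V[B]$ I would reuse the computation in the proof of \cref{thm:ube} essentially verbatim, with $V^{\pi,p}$ replaced by $V^{\pi,\mathcal{M}}$ throughout: \cref{lemma:variance_decomp_no_assumptions} carries over unchanged (the randomness of $A$ is irrelevant since $A$ does not appear in $B$, and $r(s,a)$ inside $B$ is absent), and \cref{lemma:independence_from_assumptions,lemma:uncorrelated_property} are superseded by the new lemma above, which still yields $\Cov\bracket{p(s'\mid s,a), V^{\pi,\mathcal{M}}(s')}=0$ as well as the same statement for squared values (the square is again a downstream quantity). Pushing these through exactly as in \cref{app:proofs_thm_ube} gives $\gamma^2\V\bracket{B} = \gamma^2 u_t(s) + \gamma^2\sum_{a,s'}\pi(a\mid s)\bar{p}_t(s'\mid s,a)U_t^\pi(s')$, where $u_t(s)$ is read as in \cref{eq:bellman_exact_reward} with $V^{\pi,p}$ interpreted as $V^{\pi,\mathcal{M}}$ and $\bar{V}_t^\pi(s) = \E_{\mathcal{M}\sim\Gamma_t}\bracket{V^{\pi,\mathcal{M}}(s)}$. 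Summing the three contributions gives the stated recursion.

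The main obstacle is the second step: verifying that the current-state reward $r(s,\cdot)$ is \emph{jointly} independent of the current-state transition $p(\cdot\mid s,\cdot)$ and of every downstream value $V^{\pi,\mathcal{M}}(s')$. The delicate point is that \cref{assumption:indep_transit_rewards} couples $p$ and $r$ only at a matching state, so one must chain it carefully with the cross-state independence of \cref{assumption:transitions,assumption:indep_rewards} and invoke acyclicity to exclude any path that would re-expose $r(s,\cdot)$ or $p(\cdot\mid s,\cdot)$ further down the trajectory; once this independence is established, the remainder reduces to the already-proven machinery behind \cref{thm:ube}.
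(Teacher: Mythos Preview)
Your proposal is correct and follows essentially the same route as the paper: split the variance of the Bellman recursion into the reward-variance term and $\gamma^2\V[B]$ via independence, then reuse the machinery of \cref{thm:ube} verbatim on the second term with $V^{\pi,p}$ replaced by $V^{\pi,\mathcal{M}}$. You are actually more explicit than the paper about why the cross term $\Cov[A,B]$ vanishes and about the extended independence lemma needed for $p(\cdot\mid s,a)\perp V^{\pi,\mathcal{M}}(s')$; the paper simply invokes \cref{assumption:indep_rewards,assumption:indep_transit_rewards} and states the decomposition directly.
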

\begin{proof}
  By \cref{assumption:indep_rewards,assumption:indep_transit_rewards} and following the derivation
  of \cref{lemma:variance_decomp_no_assumptions} we have
  \begin{align}
    \V_{\mathcal{M} \sim \Gamma_t} \bracket{V^{\pi, \mathcal{M}}(s)} &= \V_{\mathcal{M} \sim \Gamma_t} \bracket{\sum_a \pi(a \mid s)r(s,a) + \gamma\sum_{a, s'} p(a,s' \mid s)V^{\pi, \mathcal{M}}(s')} \\
    &= \V_{r \sim \Psi_t} \bracket{
      \sum_a \pi(a \mid s) r(s,a)
    } + \V_{\mathcal{M} \sim \Gamma_t} \bracket{\gamma\sum_{a,s'}p(a, s' \mid s)V^{\pi, \mathcal{M}}(s')}.
  \end{align}
  Then following the same derivations as \cref{app:proofs_thm_ube} completes the proof.
\end{proof}

\subsection{Extension to $Q$-values}
\label{app:extension_q_values}
Our theoretical results naturally extend to action-value functions. The following result is
analogous to \cref{thm:ube}.
\begin{restatable}{theorem}{ube_q}
  Under \cref{assumption:transitions,assumption:acyclic}, for any $(s, a) \in \mathcal{S} \times
  \mathcal{A}$ and policy $\pi$, the posterior variance of the $Q$-function, $U_t^\pi = \V_{p \sim
  \Phi_t} \bracket{Q^{\pi,p}}$ obeys the uncertainty Bellman equation
  \begin{equation}
    U_t^\pi(s,a) = 
    \gamma ^ 2u_t(s,a) + \gamma^2\sum_{a', s'}\pi(a' \mid s')\bar{p}_t(s' \mid s,a) U_t^\pi(s', a'),
  \end{equation}
  where $u_t(s,a)$ is the local uncertainty defined as
  \begin{equation}
    u_t(s,a) = \V_{a', s' \sim \pi, \bar{p}_t} \bracket{\bar{Q}_t^\pi(s', a')} -
    \E_{p \sim \Phi_t} \bracket{\V_{a', s' \sim \pi, p} \bracket{Q^{\pi, p}(s',a')}}
  \end{equation}
\end{restatable}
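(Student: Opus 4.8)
The plan is to mirror the proof of \cref{thm:ube} line for line, replacing the state-value Bellman recursion with its state-action analogue
\[
  Q^{\pi,p}(s,a) = r(s,a) + \gamma \sum_{s',a'} p(s' \mid s,a)\,\pi(a' \mid s')\, Q^{\pi,p}(s',a'),
\]
and replacing the joint $p(a,s'\mid s)$ of \cref{app:proofs_thm_ube} by $q(s',a' \mid s,a) := p(s'\mid s,a)\pi(a'\mid s')$, which is again a probability distribution over the pair $(s',a')$ since $\sum_{s',a'} p(s'\mid s,a)\pi(a'\mid s') = 1$. First I would prove the $Q$-analogue of \cref{lemma:variance_decomp_no_assumptions}: since $r(s,a)$ is deterministic, $\V_{p\sim\Phi_t}[Q^{\pi,p}(s,a)] = \gamma^2\V_{p\sim\Phi_t}\bracket{\sum_{s',a'} q(s',a'\mid s,a)Q^{\pi,p}(s',a')}$, and expanding with $\V[Y] = \E[Y^2] - (\E[Y])^2$ gives a two-term decomposition with no assumptions used so far.

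Next I would establish the $Q$-versions of \cref{lemma:independence_from_assumptions,lemma:uncorrelated_property}: under \cref{assumption:transitions,assumption:acyclic}, $\Cov[p(s'\mid s,a), Q^{\pi,p}(s',a')] = 0$, and likewise with $(Q^{\pi,p}(s',a'))^2$ in place of $Q^{\pi,p}(s',a')$. The argument is identical to that of \cref{lemma:independence_from_assumptions}: $Q^{\pi,p}(s',a')$ is an expectation over trajectories $\tau = \{s',a',s_1,a_1,\dots\}$ launched from $(s',a')$, and since the MDP is a directed acyclic graph with $s$ a predecessor of $s'$, no such trajectory revisits $s$; hence the trajectory distribution---and therefore $Q^{\pi,p}(s',a')$ and its square---depends only on transition rows $p(\cdot\mid \tilde s,\tilde a)$ with $\tilde s \neq s$, which by \cref{assumption:transitions} are independent of $p(s'\mid s,a)$. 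Combined with $\E[XY] = \Cov[X,Y] + \E[X]\E[Y]$, this gives, for $f \in \{\mathrm{id},(\cdot)^2\}$, the identity $\sum_{s',a'}\E_{p\sim\Phi_t}[p(s'\mid s,a)\pi(a'\mid s')\,f(Q^{\pi,p}(s',a'))] = \sum_{s',a'}\bar p_t(s'\mid s,a)\pi(a'\mid s')\,\E_{p\sim\Phi_t}[f(Q^{\pi,p}(s',a'))]$.

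With these in hand the computation is purely algebraic and follows the proof of \cref{thm:ube} verbatim. For the first term of the decomposition I would apply $(\E[Y])^2 = \E[Y^2] - \V[Y]$ inside the outer expectation, with $Y = Q^{\pi,p}(s',a')$ and $(s',a')\sim q(\cdot\mid s,a)$, then push the expectation inside the sum and use the uncorrelated property to pull out $\bar p_t$, obtaining $\sum_{s',a'}\bar p_t(s'\mid s,a)\pi(a'\mid s')\,\E_{p\sim\Phi_t}[(Q^{\pi,p}(s',a'))^2] - \E_{p\sim\Phi_t}[\V_{a',s'\sim\pi,p}[Q^{\pi,p}(s',a')]]$. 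For the second term I would first apply the uncorrelated property, replacing the inner expectation by $\sum_{s',a'}\bar p_t(s'\mid s,a)\pi(a'\mid s')\,\bar Q_t^\pi(s',a')$, and then expand the square with $(\E[Y])^2 = \E[Y^2] - \V[Y]$, now with $Y = \bar Q_t^\pi(s',a')$ and $(s',a')\sim\bar p_t,\pi$. Subtracting the two terms, the $\sum \bar p_t\,\pi$ pieces recombine into $\gamma^2\sum_{s',a'}\pi(a'\mid s')\bar p_t(s'\mid s,a)\bigl(\E_{p\sim\Phi_t}[(Q^{\pi,p}(s',a'))^2] - (\bar Q_t^\pi(s',a'))^2\bigr) = \gamma^2\sum_{s',a'}\pi(a'\mid s')\bar p_t(s'\mid s,a)\,U_t^\pi(s',a')$, while the leftover variance terms assemble exactly into $\gamma^2 u_t(s,a)$. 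The only real subtlety---and the step I would be most careful about---is the covariance/independence claim for $Q$-values: \cref{assumption:acyclic} is phrased in terms of states, so I must verify that ``$s$ is never visited again after reaching $(s',a')$'' really follows from the DAG structure (it does, since an acyclic state graph already forbids returning to an ancestor state regardless of the action taken), and that the argument then applies verbatim to $(Q^{\pi,p})^2$ as well as to $Q^{\pi,p}$.
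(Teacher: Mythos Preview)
Your proposal is correct and is exactly the approach the paper takes: the paper's own proof of this theorem consists of the single line ``Follows the same derivation as \cref{app:proofs_thm_ube},'' and you have spelled out precisely that derivation with the appropriate substitutions. Your attention to the one genuine subtlety---that the independence argument of \cref{lemma:independence_from_assumptions} transfers to $Q^{\pi,p}(s',a')$ because acyclicity at the state level already guarantees trajectories from $(s',a')$ never revisit $s$---is well placed and correctly resolved.
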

\begin{proof}
  Follows the same derivation as \cref{app:proofs_thm_ube}
\end{proof}
Similarly, we can connect to the upper-bound found by \citet{zhou_deep_2020} with the following
theorem.
\begin{restatable}{theorem}{connections_q_function}
  Under \cref{assumption:transitions,assumption:acyclic}, for any $(s, a) \in \mathcal{S} \times
  \mathcal{A}$ and policy $\pi$, it holds that $u_t(s,a) = w_t(s,a) - g_t(s,a)$, where $w_t(s,a) =
  \V_{p\sim \Phi_t} \bracket{\sum_{a', s'}\pi(a' \mid s') p(s' \mid s,a) \bar{Q}^\pi_t(s',a')}$ and
  $g_t(s,a) = \E_{p \sim \Phi_t}\bracket{\V_{a',s' \sim \pi, p} \bracket{Q^{\pi,p}(s',a')}-
  \V_{a',s' \sim \pi, p} \bracket{\bar{Q}^\pi_t(s',a')}}$. Furthermore, we have that the gap $g_t(s, a)
  \geq 0$ is non-negative, thus $u_t(s,a) \leq w_t(s,a)$.
\end{restatable}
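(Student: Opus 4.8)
The plan is to follow the proof of \cref{thm:connection_uncertainties} in \cref{app:proofs_thm_connections} essentially verbatim, with $Q$-functions in place of $V$-functions. Start from the $Q$-Bellman expectation equation $Q^{\pi,p}(s,a) = r(s,a) + \gamma\sum_{a',s'}\pi(a'\mid s')\,p(s'\mid s,a)\,Q^{\pi,p}(s',a')$; since $r(s,a)$ is deterministic this gives $\V_{p\sim\Phi_t}\bracket{Q^{\pi,p}(s,a)} = \gamma^2\V_{p\sim\Phi_t}\bracket{Y}$ with $Y := \sum_{a',s'}\pi(a'\mid s')\,p(s'\mid s,a)\,Q^{\pi,p}(s',a')$. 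The conditioning variable is now the single transition row $\mathbf{P}_{s,a} := \{p(s'\mid s,a)\}_{s'\in\mathcal{S}}$ (rather than the block $\mathbf{P}_s$ of all rows leaving $s$ used for $V$), drawn from the marginal of $\Phi_t$ on that row.

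First I would state the $Q$-analogue of \cref{lemma:independence_from_assumptions}: under \cref{assumption:transitions,assumption:acyclic}, $\Cov\bracket{p(s'\mid s,a),\,Q^{\pi,p}(s',a')} = 0$ for any $s',a'$. The argument is identical to the $V$ case: $Q^{\pi,p}(s',a')$ is an expectation over trajectories that leave $s'$ with first action $a'$ and then follow $\pi$; by acyclicity such trajectories never revisit $s'$, so by \cref{assumption:transitions} the row $p(\cdot\mid s,a)$ is independent of every transition appearing in that expectation, hence of $Q^{\pi,p}(s',a')$, and independence implies zero covariance. A direct consequence is $\E_{p\sim\Phi_t}\bracket{Q^{\pi,p}(s',a')\mid\mathbf{P}_{s,a}} = \bar{Q}^\pi_t(s',a')$, so that $\E\bracket{Y\mid\mathbf{P}_{s,a}} = \sum_{a',s'}\pi(a'\mid s')\,p(s'\mid s,a)\,\bar{Q}^\pi_t(s',a')$ because $p(s'\mid s,a)$ is deterministic under the conditioning.

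By the definition of $u_t(s,a)$, the first claim $u_t(s,a) = w_t(s,a) - g_t(s,a)$ is equivalent to the decomposition $\V_{a',s'\sim\pi,\bar{p}_t}\bracket{\bar{Q}^\pi_t(s',a')} = w_t(s,a) + \E_{p\sim\Phi_t}\bracket{\V_{a',s'\sim\pi,p}\bracket{\bar{Q}^\pi_t(s',a')}}$, the $Q$-version of \cref{eq:thm2_proxy_equivalence}. I would prove it via the law of total variance $\V\bracket{\E\bracket{Y\mid\mathbf{P}_{s,a}}} = \E\bracket{(\E\bracket{Y\mid\mathbf{P}_{s,a}})^2} - (\E\bracket{\E\bracket{Y\mid\mathbf{P}_{s,a}}})^2$: the left-hand side equals $w_t(s,a)$ by the previous paragraph (the $Q$-analogue of \cref{lemma:pombu_local_uncertainty}, using that sampling the marginal row equals sampling the joint), while the two terms on the right are obtained by applying $\V\bracket{Z} = \E\bracket{Z^2} - (\E\bracket{Z})^2$ pointwise to $Z = \sum_{a',s'}\pi(a'\mid s')\,p(s'\mid s,a)\,\bar{Q}^\pi_t(s',a')$ — the $Q$-analogues of \cref{lemma:rhs_pombu_uncertainty_first,lemma:rhs_pombu_uncertainty_second}. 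Their difference is exactly the displayed decomposition.

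For the non-negativity of the gap I would prove the $Q$-analogue of \cref{lemma:inflated_uncertainty}, namely $g_t(s,a) = \E_{p\sim\Phi_t}\bracket{\V_{a',s'\sim\pi,p}\bracket{Q^{\pi,p}(s',a') - \bar{Q}^\pi_t(s',a')}} \geq 0$, by equating two expansions of $\E\bracket{\V\bracket{Y\mid\mathbf{P}_{s,a}}}$ — one via $\E\bracket{\E\bracket{(Y-\E\bracket{Y\mid\mathbf{P}_{s,a}})^2\mid\mathbf{P}_{s,a}}}$ and one via $\E\bracket{\E\bracket{Y^2\mid\mathbf{P}_{s,a}}-(\E\bracket{Y\mid\mathbf{P}_{s,a}})^2}$ — using the $Q$-independence lemma and the tower property exactly as in the $V$ case; subtracting leaves the claimed variance of a centered quantity. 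Combining with $u_t(s,a) = w_t(s,a) - g_t(s,a)$ then yields $u_t(s,a)\leq w_t(s,a)$. I expect the only real point to verify — and it is a mild one — to be that the conditioning/independence arguments still go through when the block $\mathbf{P}_s$ is replaced by the single row $\mathbf{P}_{s,a}$ and that the extra conditioning on the first action $a'$ in $Q^{\pi,p}(s',a')$ does not break the acyclicity argument in the independence lemma (it does not, since the trajectory after $(s',a')$ still avoids $s'$); everything else is the same bookkeeping as in \cref{app:proofs_thm_connections}.
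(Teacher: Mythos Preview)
Your proposal is correct and matches the paper's own proof, which simply states that the derivation follows \cref{app:proofs_thm_connections} verbatim and that non-negativity of $g_t(s,a)$ comes from showing it equals $\E_{p\sim\Phi_t}\bracket{\V_{a',s'\sim\pi,p}\bracket{Q^{\pi,p}(s',a') - \bar{Q}^\pi_t(s',a')}}$. Your observation that the conditioning variable should be the single row $\mathbf{P}_{s,a}$ rather than the block $\mathbf{P}_s$ is the natural (and correct) adaptation, and your check that the independence lemma survives the extra first-action conditioning in $Q^{\pi,p}(s',a')$ is the only substantive verification needed.
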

\begin{proof}
  Follows the same derivation as \cref{app:proofs_thm_connections}. Similarly, we can prove that the
  gap $g_t(s,a)$ is non-negative by showing it is equal to $\E_{p \sim
  \Phi_t}\bracket{\V_{a',s' \sim \pi, p} \bracket{Q^{\pi,p}(s',a') - \bar{Q}^\pi_t(s',a')}}$.
\end{proof}

\subsection{State-Action Uncertainty Rewards}
\label{app:q_uncertainty_rewards}
In our practical experiments, we use the results of both
\Cref{app:unknown_rewards,app:extension_q_values} to compose the uncertainty rewards propagated via
the UBE. Concretely, we consider the following two approaches for computing state-action uncertainty
rewards:
\begin{itemize}
  \item \texttt{pombu}: 
  \begin{equation}
    \label{eq:pombu_q_rewards}
    w_t(s,a) = \V_{p\sim \Phi_t} \bracket{\sum_{a', s'}\pi(a' \mid s') p(s' \mid s,a)
    \bar{Q}^\pi_t(s',a')}
  \end{equation}
  \item \texttt{exact-ube}:
  \begin{equation}
    \label{eq:exact_ube_q_rewards}
    u_t(s,a) = w_t(s,a) - \E_{p \sim
    \Phi_t}\bracket{\V_{a',s' \sim \pi, p} \bracket{Q^{\pi,p}(s',a') - \bar{Q}^\pi_t(s',a')}}
  \end{equation}
\end{itemize}

Additionally, since we also learn the reward function, we add to the above the uncertainty term
generated by the reward function posterior, as shown in \cref{app:unknown_rewards}:
$\V_{r \sim \Psi_t} \bracket{r(s,a)}$.

\section{TABULAR ENVIRONMENTS EXPERIMENTS}
In this section, we provide more details about the tabular implementation of
\Cref{algorithm:our_algorithm}, environment details and extended results.

\subsection{Implementation Details}
\label{app:experimental_details}
\paragraph{Model learning.} For the transition function we use a prior
$\text{Dirichlet}(1/\sqrt{S})$ and for rewards a standard normal $\mathcal{N}(0,1)$, as done by
\citet{odonoghue_making_2019}. The choice of priors leads to closed-form posterior updates based on
state-visitation counts and accumulated rewards. We add a terminal state to our modeled MDP in order
to compute the values in closed-form via linear algebra. 

\paragraph{Accelerating learning.} For the \emph{DeepSea} benchmark we accelerate learning by
imagining each experienced transition $(s, a, s', r)$ is repeated $L$ times, as initially suggested
in \citet{osband_deep_2019} (see footnote $9$), although we scale the number of repeats with the
size of the MDP. Effectively, this strategy forces the MDP posterior to shrink faster, thus making
all algorithms converge in fewer episodes. The same strategy was used for all the methods evaluated
in the benchmark.

\paragraph{Policy optimization.} All tested algorithms (PSRL and OFU variants) optimize the policy
via policy iteration, where we break ties at random when computing the $\argmax$, and limit the
number of policy iteration steps to $40$.

\paragraph{Hyperparameters.} Unless noted otherwise, all tabular RL experiments use a discount
factor $\gamma=0.99$, an exploration gain $\lambda = 1.0$ and an ensemble size $N=5$.

\paragraph{Uncertainty reward clipping.} For \emph{DeepSea} we clip uncertainty rewards with
$u_{\min} = -0.05$ and for the 7-room environment we keep $u_{\min} = 0.0$.

\subsection{Environment Details}
\paragraph{\emph{DeepSea}.} As proposed by \citet{osband_deep_2019}, \emph{DeepSea} is a grid-world
environment of size $L \times L$, with $S = L^2$ and $A = 2$.

\paragraph{7-room.} As implemented by \citet{domingues_rlberry_2021}, the 7-room environment
consists of seven connected rooms of size $5\times 5$, represented as an MDP of size $S=181$ and
discrete action space with size $A=4$. The starting state is always the center cell of the middle
room, which yields a reward of $0.01$. The center cell of the left-most room gives a reward of $0.1$
and the center cell of the right-most room gives a large reward of $1$. The episode terminates after
$40$ steps and the state with large reward is absorbing (i.e., once it reaches the rewarding state,
the agent remains there until the end of the episode). The agent transitions according to the
selected action with probability $0.95$ and moves to a randomly selected neighboring cell with
probability $0.05$.

\subsection{\emph{DeepSea} Additional Experiments} 
\subsubsection{Extended Results}
\label{app:extended_results}
\cref{fig:full_results_deep_sea} shows the total regret in intervals of 50 episodes for all the
different \emph{DeepSea} sizes considered. Our method consistently achieves the lowest total regret.

\begin{figure}[t]
	\centering
  \includegraphics{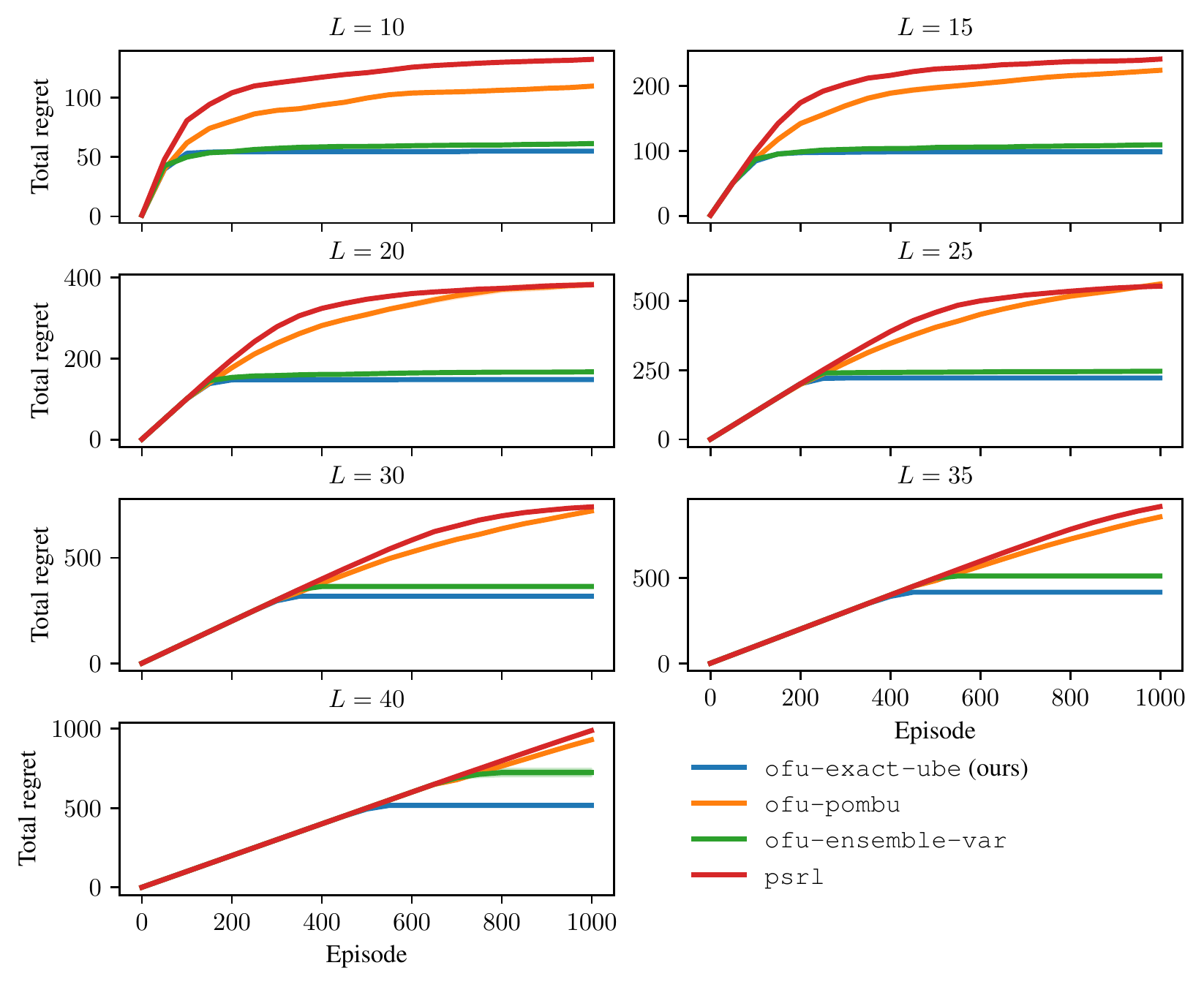}
  \caption{Extended results for the \emph{DeepSea} experiments shown in \cref{fig:deep_sea_results}.
  We report the average (solid line) and standard error (shaded region) over 5 random seeds.}
  \label{fig:full_results_deep_sea}
\end{figure}

\subsubsection{Uncertainty Rewards Ablation}
\label{app:ablation_exact_ube}
Our theory prescribes equivalent expressions for the uncertainty rewards under the assumptions.
However, since it practice the assumptions do not generally hold, the expressions are no longer
equivalent. In this section we evaluate the performance in the \emph{DeepSea} benchmark for these
different definitions of the uncertainty rewards:
\begin{itemize}
  \item \texttt{exact-ube\textunderscore 1}: 
  $$u_t(s,a) = \V_{a', s' \sim \pi,
    \bar{p}_t} \bracket{\bar{Q}_t^\pi(s', a')} - \E_{p \sim \Phi_t} \bracket{\V_{a', s' \sim \pi, p}
    \bracket{Q^{\pi, p}(s',a')}}$$
  \item \texttt{exact-ube\textunderscore 2}: 
  $$u_t(s,a) = \V_{p\sim \Phi_t} \bracket{\sum_{a', s'}\pi(a' \mid s') p(s' \mid s,a) \bar{Q}^\pi_t(s',a')} - \E_{p \sim \Phi_t}\bracket{\V_{a',s' \sim \pi, p} \bracket{Q^{\pi,p}(s',a')}-
  \V_{a',s' \sim \pi, p} \bracket{\bar{Q}^\pi_t(s',a')}}$$
  \item \texttt{exact-ube\textunderscore 3} (labeled \texttt{exact-ube} in all other plots):
  $$u_t(s,a) = \V_{p\sim \Phi_t} \bracket{\sum_{a', s'}\pi(a' \mid s') p(s' \mid s,a)
  \bar{Q}^\pi_t(s',a')} - \E_{p \sim
  \Phi_t}\bracket{\V_{a',s' \sim \pi, p} \bracket{Q^{\pi,p}(s',a') - \bar{Q}^\pi_t(s',a')}}$$
\end{itemize}
Recall that, since we consider an unknown reward function, we add the uncertainty about rewards to
the above when solving the UBE. \Cref{fig:exact_ube_ablation} shows the results for the
\emph{DeepSea} benchmark comparing the three uncertainty signals. Since the assumptions are violated
in the practical setting, the three signals are no longer equivalent and result in slightly
different uncertainty rewards. Still, when integrated into \cref{algorithm:our_algorithm}, the
performance in terms of learning time and total regret is quite similar. We select
\texttt{exact-ube\textunderscore 3} as the default estimate for all other experiments.

\begin{figure}[t]
	\centering
  \includegraphics{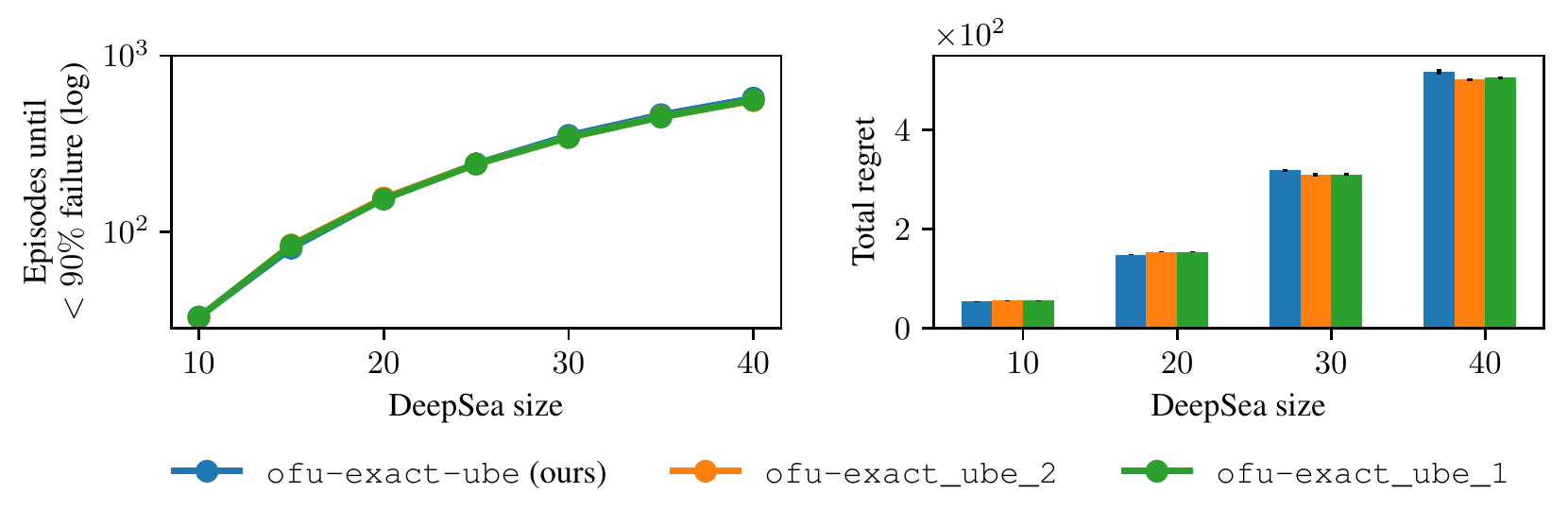}
	\caption{Ablation study on \emph{DeepSea} exploration for different estimates of
  \texttt{exact-ube}. Results represent the average over 5 random seeds, and vertical
  bars on total regret indicate the standard error.}
  \label{fig:exact_ube_ablation}
\end{figure}

\subsubsection{Ensemble Size Ablation}
\label{app:ablation_ensemble_size}
The ensemble size $N$ is one important hyperparameter for all the OFU-based methods. We perform
additional experiments in \emph{DeepSea} for different values of $N$, keeping all other
hyperparameters fixed and with sizes $L = \set{20, 30}$. The results in \Cref{fig:ensemble_size}
show that our method achieves lower total regret across the different ensemble sizes. For
\texttt{ensemble-var}, performance increases for larger ensembles. These results suggest that the
sample-based approximation of our uncertainty rewards is not very sensitive to the number of samples
and achieve good performance even for $N=2$.
\begin{figure}[t]
	\centering
  \includegraphics{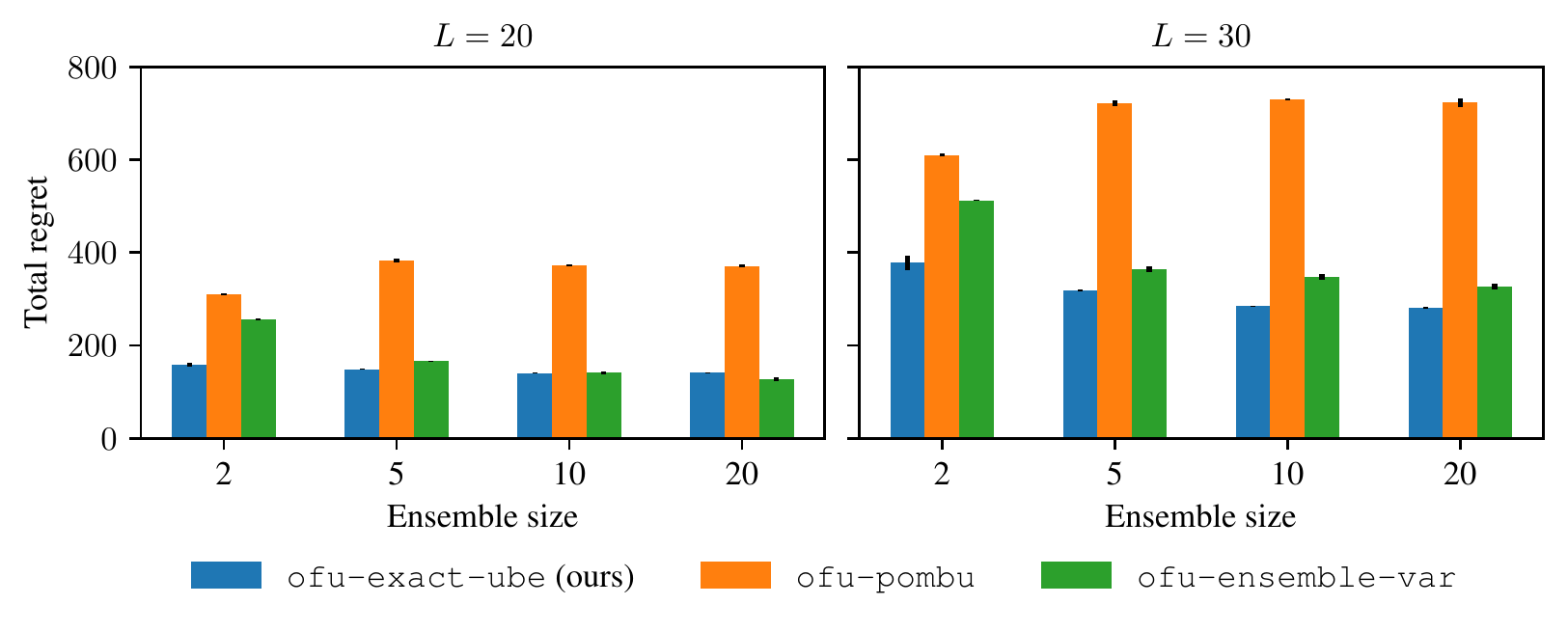}
	\caption{Ablation study over ensemble size $N$ on the \emph{DeepSea} environment.}
  \label{fig:ensemble_size}
\end{figure}

\subsubsection{Exploration Gain Ablation}
\label{app:ablation_exploration_gain}
Another important hyperparameter for OFU-based methods is the exploration gain $\lambda$,
controlling the magnitude of the optimistic values optimized via policy iteration. We perform an
ablation study over $\lambda$, keeping all other hyperparameters fixed and testing for
\emph{DeepSea} sizes $L = \set{20, 30}$. \Cref{fig:exploration_gain} shows the total regret for OFU
methods over increasing gain. Unsurprisingly, as we increase $\lambda$, the total regret of all the
methods increases, but overall \texttt{exact-ube} achieves the best performance.
\begin{figure}[t]
	\centering
  \includegraphics{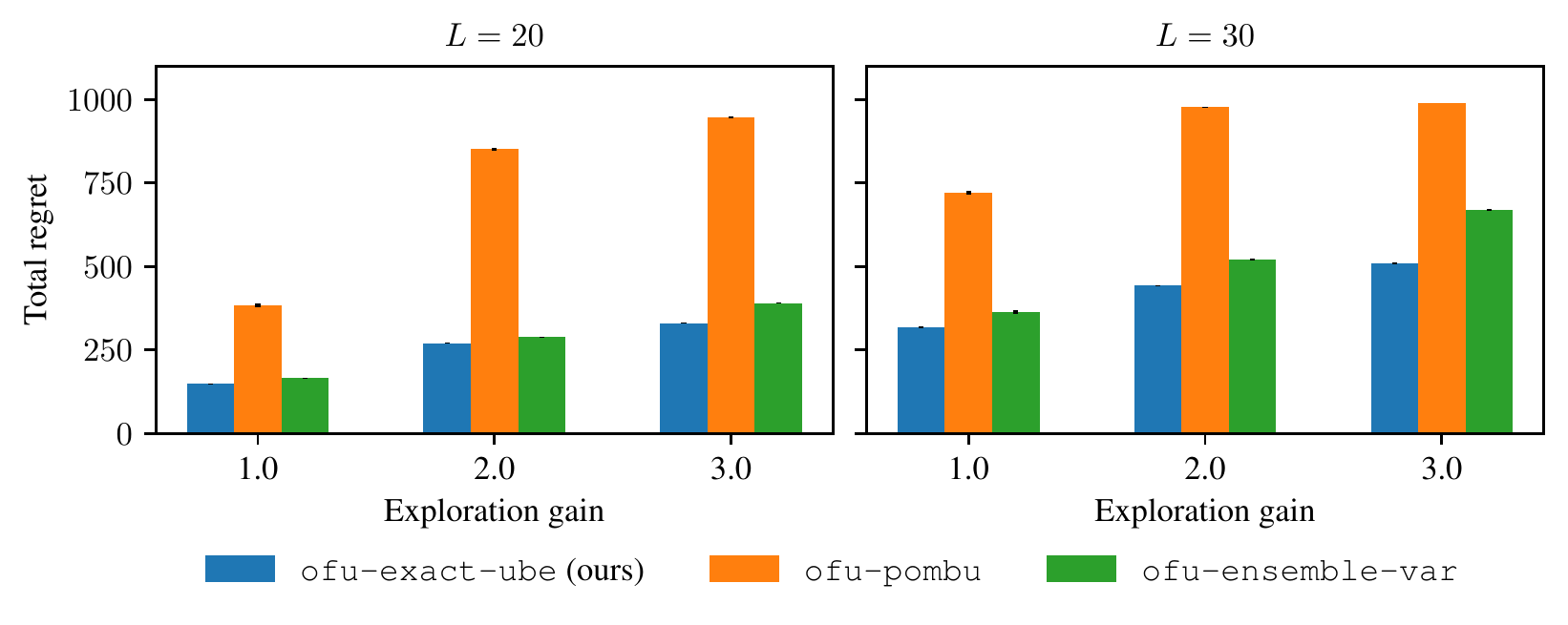}
	\caption{Ablation study over exploration gain $\lambda$ on the \emph{DeepSea} environment.}
  \label{fig:exploration_gain}
\end{figure}

\section{CONTINOUS CONTROL EXPERIMENTS}
In this section, we provide details regarding the deep RL implementation of the optimistic,
variance-driven policy optimization. Also, we include relevant hyperparameters, environment details
and additional results.

\subsection{Implementation Details}
\label{app:deep_rl_implementation}
The optimistic approach on top of MBPO \citep{janner_when_2019} is presented in
\Cref{algorithm:our_algorithm_deep_rl}. The main differences with the original implementation are as
follows:
\begin{itemize}
  \item In \Cref{line:model_rollouts}, we perform a total of $N+1$ $k$-step rollouts corresponding
  to both the model-randomized and model-consistent rollout modalities. The original MBPO only
  executes the former to fill up $\mathcal{D}_{\text{model}}$.
  \item In \Cref{line:q_update}, we update the ensemble of $Q$-functions on the corresponding
  model-consistent buffer. MBPO trains twin critics (as in SAC) on mini-batches from
  $\mathcal{D}_{\text{model}}$.
  \item In \Cref{line:u_update}, we update the $U$-net for the UBE-based variance estimation
  methods.
  \item In \Cref{line:pi_update}, we update $\pi_\phi$ by maximizing the optimistic $Q$-values. MBPO
  maximizes the minimum of the twin critics (as in SAC). Both approaches include an entropy
  maximization term.
\end{itemize}

The main hyperparameters for our experiments are included in \Cref{tab:hparam}. Further
implementation details are now provided. 
\begin{algorithm}[tb]
   \caption{MBPO-style optimistic learning}
   \label{algorithm:our_algorithm_deep_rl}
\begin{algorithmic}[1]
  \STATE Initialize policy $\pi_{\phi}$, predictive model $p_{\theta}$, critic ensemble
  $\set{Q_i}_{i=1}^{N}$, uncertainty net $U_\psi$ (optional), environment dataset $\mathcal{D}_t$,
  model datasets $\mathcal{D}_{\text{model}}$ and $\set{\mathcal{D}^i_{\text{model}}}_{i=1}^{N}$.

  \STATE global step $\leftarrow 0$
  \FOR{episode $t=0, \dots, T-1$}
    \FOR{$E$ steps}
      \IF{global step \% $F == 0$}
        \STATE Train model $p_{\theta}$ on $\mathcal{D}_t$ via maximum likelihood
        \FOR{$M$ model rollouts}
          \STATE Perform $k$-step model rollouts starting from $s \sim \mathcal{D}_t$; add to $\mathcal{D}_{\text{model}}$ and $\set{\mathcal{D}^i_{\text{model}}}_{i=1}^{N}$ \label{line:model_rollouts}
        \ENDFOR
      \ENDIF
      \STATE Take action in environment according to $\pi_{\phi}$; add to $\mathcal{D}_t$
      \FOR{$G$ gradient updates}
        \STATE Update $\set{Q_i}_{i=1}^{N}$ with mini-batches from $\set{\mathcal{D}^i_{\text{model}}}_{i=1}^{N}$, via SGD on \cref{eq:loss_q} \label{line:q_update}
        \STATE (Optional) Update $U_\psi$ with mini-batches from $\mathcal{D}_{\text{model}}$, via SGD on \cref{eq:loss_u} \label{line:u_update}
        \STATE Update $\pi_\phi$ with mini-batches from $\mathcal{D}_{\text{model}}$, via stochastic gradient ascent on the optimistic values of \cref{eq:policy_opt} \label{line:pi_update}
      \ENDFOR
    \ENDFOR
    \STATE global step $\leftarrow$ global step $+ 1$
  \ENDFOR
\end{algorithmic}
\end{algorithm}
\renewcommand{\arraystretch}{1.5}
\begin{table}[H]
\caption{Hyperparameter settings for continuous control experiments.}
\label{tab:hparam}
\begin{center}
\begin{tabular}{c|c|c|c|c}
\textbf{Hyperparameter}  & Sparse Pendulum & HalfCheetah & Walker2D & Ant \\
\hline
$T$ - \# episodes & $75$ & $200$ & \multicolumn{2}{c}{$300$} \\
\hline
$E$ - \# steps per episode & $400$ & \multicolumn{3}{c}{$1000$} \\
\hline
$G$ - policy updates per step & $20$  & \multicolumn{3}{c}{$10$}\\
\hline
$M$ - \# model rollouts per step & \multicolumn{4}{c}{$400$}\\
\hline
$F$ - frequency of model retraining (\# steps) & $400$  & \multicolumn{3}{c}{$250$}\\
\hline
retain updates & $1$  & \multicolumn{3}{c}{$10$}\\
\hline
$N$ - ensemble size & \multicolumn{4}{c}{$5$}\\
\hline
$\lambda$ - exploration gain & \multicolumn{4}{c}{$1.0$} \\
\hline
$\lambda_{\text{reg}}$ - UBE regulatization gain & $5.0$ & \multicolumn{3}{c}{$0.0$} \\
\hline
$k$ - rollout length & $10$  & \multicolumn{3}{c}{$1$}\\
\hline
Model network & \multicolumn{4}{c}{$4$ layers, $200$ units, SiLU activations}\\
\hline
Policy network & $2$ layers, $64$ units, Tanh activation & \multicolumn{3}{c}{$2$ layers, $128$
units, Tanh activations}\\
\hline
$Q$ and $U$ networks & \multicolumn{4}{c}{$2$ layers, $256$ units, Tanh activations}
\end{tabular}
\end{center}
\end{table}

\paragraph{Model learning.} We leverage the \texttt{mbrl-lib} Python library from
\citet{pineda_mbrl-lib_2021} and train an ensemble of $N$ probabilistic neural networks. We use the
default MLP architecture with four layers of size 200 and SiLU activations. The networks predict
delta states, $\Delta = s' - s$, and receive as input normalized state-action pairs. The
normalization statistics are updated each time we train the model, and are based on the dataset
$\mathcal{D}_t$. We use the default initialization of the network provided by the library, which
samples weights from a truncated Gaussian distribution, however we found it helpful to increase by a
factor of $2.0$ the standard deviation of the truncated Gaussian for the sparse pendulum task; a
wider distribution of weights allows for more diverse dynamic models at the beginning of training
and thus a stronger uncertainty signal to guide exploration.

\paragraph{Model-generated buffers.} The capacity of the model-generated buffers
$\mathcal{D}_{\text{model}}$ and $\set{\mathcal{D}^i_{\text{model}}}_{i=1}^{N}$ is computed as $k
\times M \times F \times$\texttt{retain updates}, where \texttt{retain updates} is the number of
model updates before entirely overwriting the buffers. Larger values of this parameter allows for
more off-policy (old) data to be stored and sampled for training.

\paragraph{Uncertainty reward estimation.} We estimate the uncertainty rewards
\cref{eq:pombu_q_rewards,eq:exact_ube_q_rewards} using a finite-sample approximation. For
$w_t(s,a)$, the inner expectation is estimated using a single action $a' \sim \pi(\cdot \mid s')$,
where we take $s'$ to be the mean of the Gaussian distribution parameterized by each ensemble
member. For the gap term in $u_t(s,a)$, we sample $10$ actions from the current policy to estimate
the aleatoric variance term $\V_{a',s'}\bracket{\cdot}$. We clip the uncertainty rewards with
$u_{\min} = 0.0$.

\paragraph{SAC specifics.} Our SAC implementation is based on the open-source repository
\url{https://github.com/pranz24/pytorch-soft-actor-critic}, as done by \texttt{mbrl-lib}. For all
our experiments, we use the automatic entropy tuning flag that adaptively modifies the entropy gain
$\alpha$ based on the stochasticity of the policy.

\subsection{Environment Details}
\paragraph{Sparse Pendulum.} The implementation is taken from
\url{https://github.com/sebascuri/hucrl/blob/4b4446e54a7269366eeafabd90f91fbe466d8b15/exps/inverted_pendulum/util.py}
and adapted to the OpenAI Gym \citep{brockman_openai_2016} convention for RL environments. We use an
action cost multiplier $\rho = 0.2$ for all our experiments.

\paragraph{Pybullet environments.} We use the default Pybullet locomotion environments but remove
the observations related to feet contact, which are represented as binary variables, as these can
pose challenges to model learning.

\subsection{Ensemble Size Ablation}
\label{app:sparse_pendulum_ensemble_ablation}
We repeat the experiments for the sparse pendulum task for different ensemble sizes, and summarize
the results in \cref{fig:sparse_pendulum_ensemble_ablation}. In most cases, performance increases
with $N$, although there exists some outliers. In some specific cases, we observed larger ensembles
could be detrimental to learning with sparse rewards: if most members of the ensemble converge to
similar values then the policy might prematurely converge to a suboptimal policy. We believe network
initialization and regularization may play a critical role in maintaining sufficient ensemble
diversity to drive exploration in sparse reward settings.

\begin{figure}[ht]
	\centering
  \includegraphics{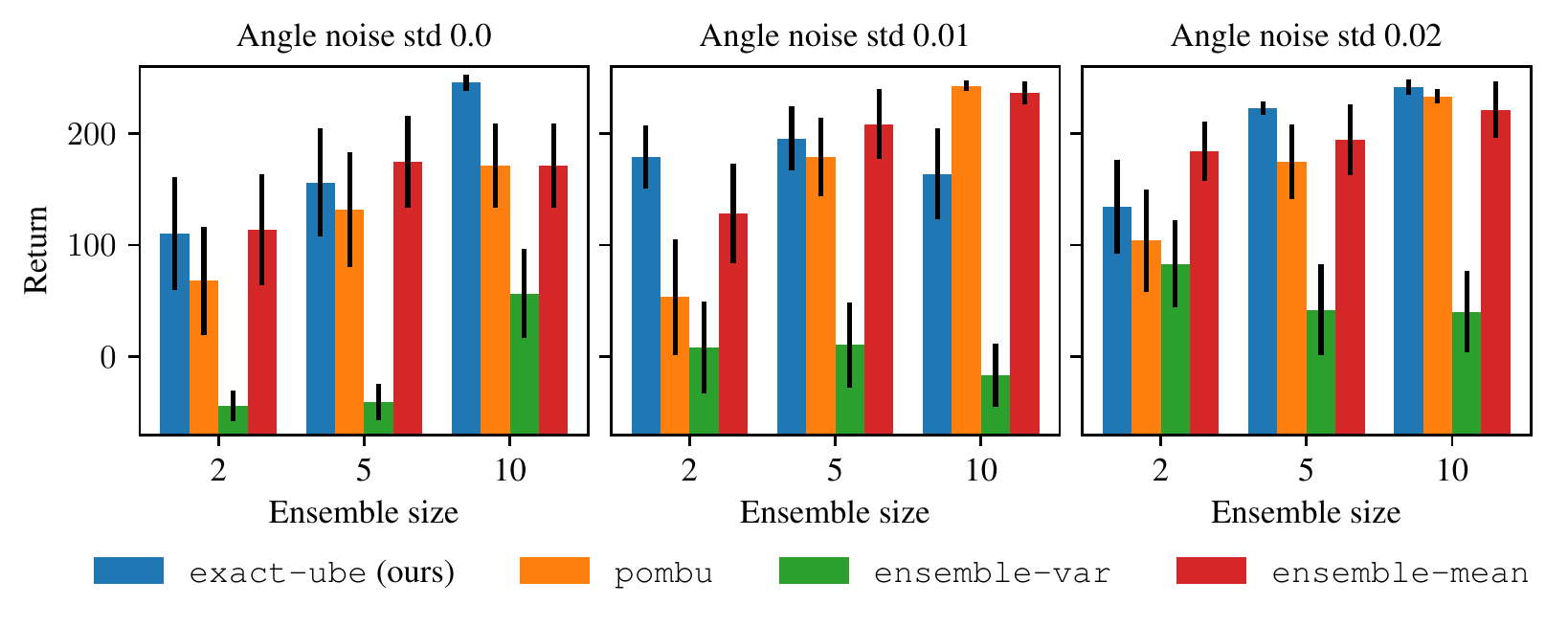}
  \caption{Ensemble size ablation study on the sparse pendulum swing-up problem. We report the mean and standard error of the final return after 75 episodes over 10 random seeds.}
  \label{fig:sparse_pendulum_ensemble_ablation}
\end{figure}

\subsection{Visualization of Variance Estimates}
\label{app:pendulum_viz}
In this section, we visualize the evolution of the value function and variance estimates during
training in the sparse pendulum problem using optimistic values estimated with the
\texttt{exact-ube} method. In \cref{fig:pendulum_viz}, we plot the mean $Q$-values and the standard
deviations corresponding to the \texttt{exact-ube} and \texttt{ensemble-var} estimates. While both
\texttt{exact-ube} and \texttt{ensemble-var} have higher variance in regions of interest for
exploration, the latter outputs much larger estimates, which may lead to over-exploration. 
\begin{figure}[ht]
	\centering
  \includegraphics{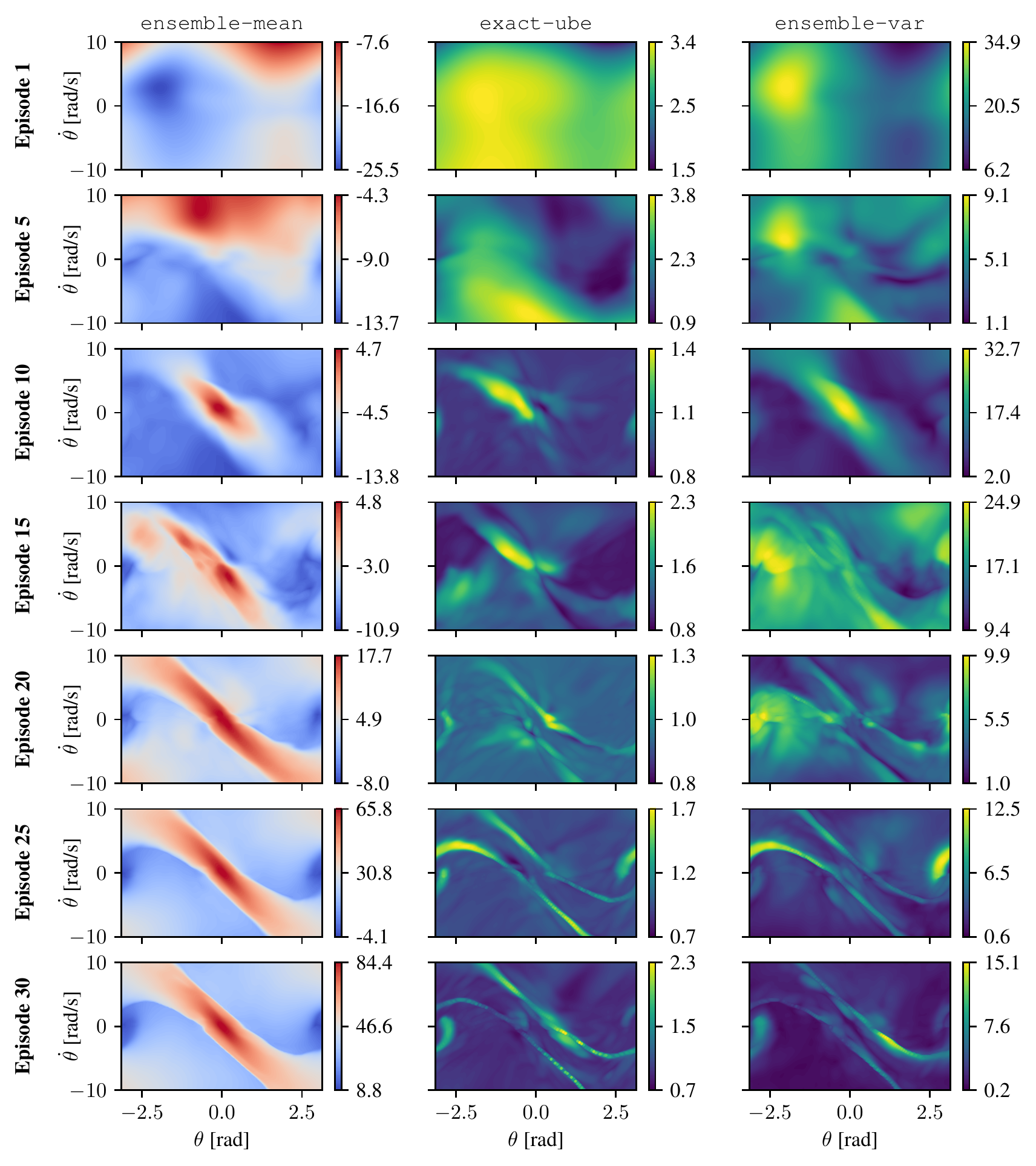}
  \caption{Visualization of training in sparse pendulum swing-up task using optimistic values
  estimated with the \texttt{exact-ube} method. (Left column) The mean values correspond to
  $\bar{Q}^\pi_t(s, \bar{a})$, where $\bar{a}$ is the mean of the Gaussian policy $\pi$ at the
  corresponding episode. (Center and right columns) The posterior standard deviation of $Q$-values,
  computed as $\sqrt{\hat{U}^\pi_t(s, \bar{a})}$ for the \texttt{exact-ube} and
  \texttt{ensemble-var} variance estimates.}
  \label{fig:pendulum_viz}
\end{figure}

\end{document}